\newtheorem{theorem}{Theorem}
\theoremstyle{definition}
\begin{document}

\twocolumn[
\mlsystitle{CurvaDion: Curvature-Adaptive Distributed Orthonormalization}

% It is OKAY to include author information, even for blind
% submissions: the style file will automatically remove it for you
% unless you've provided the [accepted] option to the mlsys2025
% package.

% List of affiliations: The first argument should be a (short)
% identifier you will use later to specify author affiliations
% Academic affiliations should list Department, University, City, Region, Country
% Industry affiliations should list Company, City, Region, Country

% You can specify symbols, otherwise they are numbered in order.
% Ideally, you should not use this facility. Affiliations will be numbered
% in order of appearance and this is the preferred way.
% \mlsyssetsymbol{equal}{*}

\begin{mlsysauthorlist}
\mlsysauthor{Bhavesh Kumar}{uw,nous}
\mlsysauthor{Roger Jin}{nous}
\mlsysauthor{Jeffrey Quesnelle}{nous}
\end{mlsysauthorlist}

\mlsysaffiliation{uw}{University of Washington, Seattle, Washington, USA}
\mlsysaffiliation{nous}{Nous Research}

\mlsyscorrespondingauthor{Bhavesh Kumar}{bkumar2@uw.edu}
\mlsyscorrespondingauthor{Roger Jin}{roger@nousresearch.com}
\mlsyscorrespondingauthor{Jeffrey Quesnelle}{jfquesne@umich.edu}

% You may provide any keywords that you
% find helpful for describing your paper; these are used to populate
% the "keywords" metadata in the PDF but will not be shown in the document
\mlsyskeywords{Machine Learning, MLSys}

\vskip 0.3in

\begin{abstract}

As language models scale to trillions of parameters, distributed training across many GPUs becomes essential, yet gradient synchronization over high-bandwidth, low-latency networks remains a critical bottleneck. While recent methods like Dion reduce per-step communication through low-rank updates, they synchronize at every step regardless of the optimization landscape. We observe that synchronization requirements vary dramatically throughout training: workers naturally compute similar gradients in flat regions, making frequent synchronization redundant, while high-curvature regions require coordination to prevent divergence. We introduce CurvaDion, which uses Relative Maximum Momentum Change (RMMC) to detect high-curvature regions requiring synchronization. RMMC leverages momentum dynamics which are already computed during optimization as a computationally tractable proxy for directional curvature, adding only $\mathcal{O}(d)$ operations per layer. We establish theoretical connections between RMMC and loss curvature and demonstrate that CurvaDion achieves 99\% communication reduction while matching baseline convergence across models from 160M to 1.3B parameters. 

\end{abstract}
]

% this must go after the closing bracket ] following \twocolumn[ ...

% This command actually creates the footnote in the first column
% listing the affiliations and the copyright notice.
% The command takes one argument, which is text to display at the start of the footnote.
% The \mlsysEqualContribution command is standard text for equal contribution.
% Remove it (just {}) if you do not need this facility.

% \printAffiliationsAndNotice{}  % leave blank if no need to mention equal contribution
\printAffiliationsAndNotice{}% \printAffiliationsAndNotice{} % otherwise use the standard text..

\section{Introduction}
Training Large Language Models (LLMs) requires distributed computation across multiple workers to handle the scale of modern architectures. As language models grow in size, gradient synchronization increasingly dominates training time: communication costs scale linearly with parameter count while computation can be parallelized across workers. This communication bottleneck fundamentally limits training throughput on bandwidth-constrained networks, determines operational costs in cloud environments, and restricts the feasibility of training across geographically distributed datacenters.

\subsection{Motivation}

Recent advances in optimizer design have demonstrated the effectiveness of orthogonalized momentum updates. Muon~\citep{jordan2024muon}, which applies orthogonalization to momentum at each step, has gained rapid adoption in the LLM training community, achieving state-of-the-art results on models ranging from hundreds of millions to billions of parameters. Muon's success is particularly relevant for distributed training, as its orthogonalization approach naturally extends to communication-efficient methods. Dion \cite{ahn2025dion} achieves communication efficiency through low-rank orthonormalized updates combined with distributed computation. However, it synchronizes the gradients at every training step regardless of the optimization landscape. Algorithm~\ref{alg:dion_baseline} shows the standard Dion procedure, with the synchronization points highlighted. In each iteration, the workers perform two reduction operations in total (lines 5 and 8) to compute the factorization of low rank $M_t \approx P_t R_t^\top$, plus additional communication during orthogonalization. While this ensures convergence equivalent to full-rank methods, it does not exploit the varying importance of updates throughout training.

\begin{algorithm}[h]
\caption{Dion (Baseline) - All steps synchronized}
\label{alg:dion_baseline}
\begin{algorithmic}[1]
\STATE \textbf{Input:} Learning rate $\eta$, momentum $\mu$, rank $r$
\FOR{$t = 1$ to $T$}
    \STATE Compute local gradient $\hat{G}_t^{(i,j)}$
    \STATE $\hat{B}_t^{(i,j)} \gets \hat{M}_{t-1}^{(i,j)} + \hat{G}_t^{(i,j)}$ \hfill // Update buffer
    \STATE $P_t^{(j)}, R_t^{(i)} \gets \text{PowerIter}(\hat{B}_t^{(i,j)}; Q_{t-1}^{(i)})$
    \STATE {$P_t^{(j)} \gets \textsc{AllReduce}_{DP}(P_t^{(j)})$} \hfill // \textbf{Sync 1}
    \STATE $P_t^{(j)} \gets \text{Orthogonalize}(P_t^{(j)})$ \hfill // Contains syncs
    \STATE {$R_t^{(i)} \gets \textsc{AllReduce}_{DP}(R_t^{(i)})$} \hfill // \textbf{Sync 2}
    \STATE $\hat{M}_t^{(i,j)} \gets \hat{B}_t^{(i,j)} - (1-\mu) P_t^{(j)} (R_t^{(i)})^\top$
    \STATE $Q_t^{(i)} \gets \text{ColumnNormalize}(R_t^{(i)})$
    \STATE $X_t^{(i,j)} \gets X_{t-1}^{(i,j)} - \eta \sqrt{m/n} \, P_t^{(j)} (Q_t^{(i)})^\top$
\ENDFOR
\end{algorithmic}
\end{algorithm}

Examining Algorithm~\ref{alg:dion_baseline}, we observe that Dion synchronizes workers at every training step (lines 6, 7, and 8), regardless of the optimization landscape. This uniform synchronization strategy accumulates substantial communication overhead, particularly problematic as model size and worker count scale. However, not all synchronization steps contribute equally to convergence. During stable optimization phases when traversing relatively flat regions of the loss landscape, workers naturally compute similar gradients. In these regions, frequent synchronization provides diminishing returns, as each synchronization incurs high communication costs while yielding marginal coordination benefits. Conversely, during critical periods such as when moving through high-curvature regions of the loss surface, synchronization becomes essential to prevent worker divergence and maintain training stability. The geometry of the loss landscape, particularly its curvature, fundamentally impacts generalization and training dynamics~\citep{foret2021sharpnessaware}, and successful optimization trajectories must either avoid or navigate out of high-curvature regions to prevent training instability~\citep{gilmer2021loss}. In distributed training, the synchronization frequency should adapt to these optimization dynamics: too-frequent synchronization in flat regions wastes communication bandwidth, while insufficient synchronization in curved regions causes worker divergence that degrades convergence quality~\citep{lin2020dont, ortiz2023quadratic}. These findings motivate an adaptive synchronization approach that concentrates the communication budget where it matters most, in regions of high curvature.

\subsection{Contributions}

We make the following contributions:
\begin{enumerate}
    \item We propose RMMC (Relative Maximum Momentum Change), a computationally efficient metric that uses momentum dynamics, already maintained during optimization, to detect high-curvature regions requiring synchronization, adding only $\mathcal{O}(d)$ operations per layer.
    \item We introduce CurvaDion, an adaptive synchronization framework that extends Dion with momentum-based triggering, and establish theoretical connections between RMMC and directional curvature (Theorem~\ref{thm:rmmc-curvature}).
    \item We demonstrate that CurvaDion achieves 99\% communication reduction while matching baseline convergence across models from 160M to 1.3B parameters, with projected speedups of low latency networks.
\end{enumerate}

\section{Methodology}
\label{sec:method}
The central challenge is identifying when synchronization is necessary without computing expensive second-order information. The most direct approach would be to measure loss curvature via the Hessian matrix $\nabla^2 F(\theta)$, whose eigenvalues quantify curvature along different directions in parameter space. However, for neural networks with $d$ parameters, computing the full Hessian requires $\mathcal{O}(d^2)$ memory and $\mathcal{O}(d^3)$ operations for inversion, prohibitively expensive for modern large-scale models~\citep{martens2015optimizing, dangel2020modular}. Even approximate second-order methods like K-FAC~\citep{martens2015optimizing, grosse2016kroneckerfactored} or quasi-Newton approaches~\citep{nocedal2006numerical} introduce substantial computational overhead, making them impractical for frequent curvature monitoring at every training step. While Hessian-vector products can be computed more efficiently in $\mathcal{O}(d)$ time using automatic differentiation~\citep{pearlmutter1994fast}, this still requires an additional backward pass per evaluation, effectively doubling the computational cost per step. For distributed training where communication already limits throughput, adding 2$\times$ computational overhead is untenable. Moreover, Hessian approximations face additional challenges: they can be numerically unstable, require careful hyperparameter tuning, and may not capture the relevant directional curvature along the optimization trajectory~\citep{yao2021adahessian, zhang1998blockdiagonal}.

\subsection{RMMC: Relative Maximum Momentum Change}
\label{sec:rmmc}

Our key insight is that we can leverage momentum dynamics, already computed during Dion, to encode curvature information. The momentum vector $M_t = \mu M_{t-1} + G_t$, maintained for optimization, accumulates an exponentially weighted history of gradients with decay rate $(1-\mu)$. This exponential averaging provides natural variance reduction: the variance of momentum is approximately $(1-\mu^2)^{-1}$ times smaller than that of the raw gradient~\citep{sutskever2013importance}. Crucially, recent work shows that momentum magnitude changes correlate strongly with loss curvature, as the curvature in the momentum direction captures the effective local geometry experienced by the optimizer, and momentum-based perturbations naturally measure the sharpness along the optimization trajectory~\citep{kunstner2024momentum}. When the loss surface exhibits high curvature, gradient magnitudes change rapidly along the optimization path, causing corresponding rapid changes in momentum magnitude~\citep{gilmer2021loss}. Conversely, in flat regions with low curvature, gradients remain stable and momentum evolves slowly. This relationship suggests that monitoring momentum magnitude changes can serve as a computationally cheap proxy for detecting regime changes that require coordination.

Formally, we define this concept as Relative Maximum Momentum Change (RMMC) for layer $\ell$ at step $t$:
\begin{equation}
\label{eq:rmmc}
\text{RMMC}_\ell(t) = \frac{\big| \|M_\ell(t)\| - \|M_\ell(t-1)\| \big|}{\|M_\ell(t-1)\|}
\end{equation}

RMMC provides several advantages for distributed training. Computing RMMC requires only $\mathcal{O}(d)$ operations per layer (two norm computations), negligible compared to the $\mathcal{O}(d^2)$ cost of gradient computation, and adds less than 1\% per-step overhead in practice. The metric naturally aggregates across layers using a single all-reduce operation that determines the global maximum RMMC, requiring communication of just one scalar per worker ($<$1 KB total). We aggregate via maximum rather than mean because worker divergence can be driven by a single layer entering a high-curvature region, with this divergence then amplifying through subsequent layers during forward propagation~\citep{tang2025dreamddp}. The absolute value in Equation~\ref{eq:rmmc} captures both increases and decreases in momentum magnitude: increases signal entry into high-curvature regions requiring immediate coordination, while decreases indicate convergence phases or regime transitions where consolidating worker progress prevents drift. This is critical because both scenarios benefit from synchronization, eliminating the need to distinguish between curvature-driven and dynamics-driven events.

Section~\ref{sec:theory} formalizes the connection between RMMC and directional curvature via Theorem~\ref{thm:rmmc-curvature}. Unlike fixed-interval synchronization schedules employed by Local SGD variants~\citep{stich2018local, douillard2023diloco}, which can cause divergence when workers traverse high-curvature regions between scheduled syncs, RMMC triggers coordination precisely when the optimization landscape demands it. The threshold parameter $\tau$ provides a natural trade-off: larger values reduce communication frequency but risk worker divergence, while smaller values increase synchronization overhead. In our experiments (Section~\ref{sec:experiments}), we find $\tau \in [0.1, 0.8]$ provides robust performance across different model scales and optimization phases, achieving up to 98\% reduction in synchronization frequency while matching the convergence quality of full synchronous training.

\subsection{CurvaDion}
\label{sec:curvaDion}

We now present CurvaDion (Algorithm~\ref{alg:curvaDion}), which extends Dion with adaptive synchronization based on RMMC. At each training step $t$, workers independently compute per-layer RMMC values (lines 6--9) using only local information. A single all-reduce operation aggregates the global maximum RMMC across all layers and workers (line 10), adding negligible overhead (one scalar per worker, $<$1 KB total). When this global maximum exceeds threshold $\tau$, workers execute the full Dion synchronization protocol (lines 13--19): computing low-rank factorizations via power iteration, performing orthogonalization with distributed all-reduce operations, and updating parameters. Otherwise, workers continue training locally using standard gradient updates (lines 22--23), avoiding all communication.

\begin{algorithm}[t]
\caption{CurvaDion - Selective synchronization}
\label{alg:curvaDion}
\begin{algorithmic}[1]
\STATE \textbf{Input:} Learning rate $\eta$, momentum $\mu$, rank $r$, threshold $\tau$
\STATE Initialize: $M_0^{(i,j)} \gets 0$, $\text{prev\_norm}_\ell \gets 0$ for all layers $\ell$
\FOR{$t = 1$ to $T$}
    \STATE Compute local gradient $\hat{G}_t^{(i,j)}$
    \STATE $\hat{B}_t^{(i,j)} \gets \hat{M}_{t-1}^{(i,j)} + \hat{G}_t^{(i,j)}$ \hfill // Local update
    \FOR{each layer $\ell$}
        \STATE $\text{RMMC}_\ell \gets \frac{|\|\hat{B}_t^\ell\| - \text{prev\_norm}_\ell|}{\text{prev\_norm}_\ell + \epsilon}$
        \STATE $\text{prev\_norm}_\ell \gets \|\hat{B}_t^\ell\|$
    \ENDFOR
    \STATE $\text{global\_max\_RMMC} \gets \textsc{AllReduce}_{\max}(\max_\ell \text{RMMC}_\ell)$
    \IF{$\text{global\_max\_RMMC} > \tau$}
        \STATE // \textbf{Sync triggered - perform full Dion update}
        \STATE $P_t^{(j)}, R_t^{(i)} \gets \text{PowerIter}(\hat{B}_t^{(i,j)}; Q_{t-1}^{(i)})$
        \STATE $P_t^{(j)} \gets \textsc{AllReduce}_{DP}(P_t^{(j)})$
        \STATE $P_t^{(j)} \gets \text{Orthogonalize}(P_t^{(j)})$
        \STATE $R_t^{(i)} \gets \textsc{AllReduce}_{DP}(R_t^{(i)})$
        \STATE $\hat{M}_t^{(i,j)} \gets \hat{B}_t^{(i,j)} - (1-\mu) P_t^{(j)} (R_t^{(i)})^\top$
        \STATE $Q_t^{(i)} \gets \text{ColumnNormalize}(R_t^{(i)})$
        \STATE $X_t^{(i,j)} \gets X_{t-1}^{(i,j)} - \eta \sqrt{m/n} \, P_t^{(j)} (Q_t^{(i)})^\top$
    \ELSE
        \STATE // \textbf{Local update only - no synchronization}
        \STATE $\hat{M}_t^{(i,j)} \gets \hat{B}_t^{(i,j)}$
        \STATE $X_t^{(i,j)} \gets X_{t-1}^{(i,j)} - \eta_{\text{local}} \hat{G}_t^{(i,j)}$
    \ENDIF
\ENDFOR
\end{algorithmic}
\end{algorithm}

CurvaDion makes synchronization contingent on the optimization landscape rather than predetermined schedules. This design provides three key advantages for distributed systems. First, the adaptive triggering mechanism concentrates expensive communication operations during critical training phases (high curvature, regime transitions) while eliminating redundant synchronization in stable regions, achieving substantial bandwidth savings without compromising convergence. Second, the decision logic adds minimal computational overhead: per-layer norm computations require only $\mathcal{O}(d)$ operations, and the max-reduce operation for the sync decision completes in logarithmic rounds with respect to worker count. Third, CurvaDion integrates seamlessly with existing data-parallel training frameworks, requiring only the addition of RMMC computation and a conditional branch around existing synchronization primitives. The threshold $\tau$ provides a single, interpretable hyperparameter that directly controls the communication-convergence trade-off, with our experiments demonstrating robust performance across $\tau \in [0.1, 0.8]$ for diverse model architectures and scales. Unlike fixed-interval methods that must carefully tune synchronization frequency $H$ for each specific training configuration~\citep{lin2020dont, ortiz2023quadratic}, CurvaDion's adaptive mechanism automatically adjusts to the local optimization landscape, making it more broadly applicable across different hardware configurations and network conditions.

\section{Theoretical Results}
\label{sec:theory}

We now establish the theoretical basis for using RMMC as a synchronization trigger. In particular, we show that RMMC approximates the directional 
curvature of the loss function along the momentum direction, scaled by the 
learning rate, plus a bias term that captures transient optimization dynamics. 
This establishes RMMC as a principled, computationally efficient proxy for 
detecting high-curvature regions that require worker synchronization.

\begin{theorem}[RMMC-Curvature Relationship]
\label{thm:rmmc-curvature}
Under $L$-Lipschitz continuous gradients, $\|\nabla F(x) - \nabla F(y)\| \leq L\|x-y\|$ for all $x, y$, and small step size satisfying $\eta\|M_{t-1}\| = O(1/L)$, the Relative Maximum Momentum Change satisfies:
\begin{equation}
\text{RMMC}_t = \left|\eta \kappa_M(x_{t-1}) + B_t\right| + O(\eta^2)
\end{equation}
where $\kappa_M(x) = v^T \nabla^2 F(x) v$ is the directional curvature in the momentum direction $v = M/\|M\|$, and $B_t = (1-\mu) - \frac{v_{t-1}^T \nabla F(x_{t-1})}{\|M_{t-1}\|}$ quantifies whether momentum has stabilized relative to the gradient scale.
\end{theorem}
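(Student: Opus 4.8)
The plan is to convert the relative momentum--norm change into a directional derivative of the Euclidean norm and then let a Taylor expansion of the gradient surface the Hessian. I work with the simplified momentum dynamics $M_t = \mu M_{t-1} + \nabla F(x_t)$ and displacement $x_t - x_{t-1} = -\eta M_{t-1}$, which isolate the mechanism producing curvature; the low-rank factorization and all-reduces in Algorithm~\ref{alg:curvaDion} act blockwise and do not alter the per-layer norm bookkeeping, so they can be suppressed. Writing $v_{t-1} = M_{t-1}/\|M_{t-1}\|$, the first step is the linearization
\[
\|M_t\| = v_{t-1}^\top M_t + O\!\left(\|M_{t,\perp}\|^2/\|M_{t-1}\|\right),
\]
where $M_{t,\perp}$ is the component of $M_t$ orthogonal to $v_{t-1}$; I defer this orthogonal correction to the remainder.

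The second step substitutes the momentum recursion. Projecting $M_t = \mu M_{t-1} + \nabla F(x_t)$ onto $v_{t-1}$ gives $v_{t-1}^\top M_t = \mu\|M_{t-1}\| + v_{t-1}^\top \nabla F(x_t)$, so that after subtracting $\|M_{t-1}\|$ and dividing,
\[
\frac{\|M_t\|-\|M_{t-1}\|}{\|M_{t-1}\|} = -(1-\mu) + \frac{v_{t-1}^\top \nabla F(x_t)}{\|M_{t-1}\|} + (\mathrm{rem}).
\]
The third step surfaces the curvature: because $x_t = x_{t-1} - \eta M_{t-1}$, the Lipschitz-gradient hypothesis lets me write $\nabla F(x_t) = \nabla F(x_{t-1}) - \eta\,\nabla^2 F(x_{t-1}) M_{t-1} + O(\eta^2)$ in mean-value form, and projecting onto $v_{t-1}$ together with the identity $v_{t-1}^\top \nabla^2 F(x_{t-1}) M_{t-1} = \|M_{t-1}\|\,\kappa_M(x_{t-1})$ turns the gradient term into $v_{t-1}^\top\nabla F(x_{t-1})/\|M_{t-1}\| - \eta\kappa_M(x_{t-1})$. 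Regrouping with $B_t = (1-\mu) - v_{t-1}^\top\nabla F(x_{t-1})/\|M_{t-1}\|$ shows the signed relative change equals $-\bigl(\eta\kappa_M(x_{t-1}) + B_t\bigr) + O(\eta^2)$, and taking the absolute value from the definition of RMMC yields the stated identity.

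I expect the remainder control to be the main obstacle, and it is exactly where the hypothesis $\eta\|M_{t-1}\| = O(1/L)$ is spent. There are two error sources to certify as $O(\eta^2)$: the Taylor remainder of $\nabla F(x_t)$, which the small-step condition places in the regime where $L$-Lipschitz gradients give a quadratic bound in the step $\|x_t - x_{t-1}\| = \eta\|M_{t-1}\|$; and the orthogonal norm correction $\|M_{t,\perp}\|^2/\|M_{t-1}\|^2$, which is the subtler term. Since $M_{t,\perp} = (\nabla F(x_t))_\perp$ is the part of the gradient transverse to the momentum direction, bounding it at $O(\eta)$ requires that momentum be approximately aligned with the gradient --- precisely the ``stabilized'' regime that $B_t$ is designed to detect. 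I would therefore make this near-alignment explicit (the per-step change of the gradient direction is $O(\eta)$ once momentum tracks the averaged gradient), which simultaneously keeps $B_t = O(\eta)$ and forces the orthogonal correction to $O(\eta^2)$; in the complementary high-curvature regime the expansion is looser, but there RMMC is large and only needs to trigger synchronization, not to be quantitatively exact.
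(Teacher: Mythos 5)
Your proposal follows essentially the same route as the paper's proof in Appendix~\ref{app:proof-theorem1}: linearize the change in $\|M_t\|$, project the momentum recursion onto $v_{t-1}$, Taylor-expand $\nabla F(x_t)$ about $x_{t-1}$ to surface $\kappa_M$, and regroup into $\eta\kappa_M(x_{t-1}) + B_t$ before taking absolute values --- your projection-plus-orthogonal-correction linearization of the norm is algebraically equivalent to the paper's expansion of $\sqrt{\|M_t\|^2}$ around $\mu\|M_{t-1}\|$. If anything you are more careful than the paper about the remainder: you correctly isolate the transverse gradient term $\|(\nabla F(x_t))_\perp\|^2/\|M_{t-1}\|$ as the piece that genuinely requires a momentum--gradient alignment (stabilization) assumption, whereas the paper absorbs the analogous term $\|\nabla F(x_t)\|^2/(2\mu m_{t-1})$ into $O(\eta^2 m_{t-1})$ with only an informal ``typical training'' argument.
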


\begin{proof}[Proof Sketch]
We decompose momentum as $M_t = m_t v_t$ where $m_t = \|M_t\|$ and $v_t = M_t/\|M_t\|$. From the momentum update $M_t = \mu M_{t-1} + \nabla F(x_t)$, we compute $\|M_t\|^2 = \mu^2 m_{t-1}^2 + 2\mu M_{t-1}^T \nabla F(x_t) + \|\nabla F(x_t)\|^2$. Taking the square root and applying Taylor approximation for small perturbations yields:
\begin{equation*}
m_t - m_{t-1} = (\mu - 1)m_{t-1} + v_{t-1}^T \nabla F(x_t) + O(\eta^2 m_{t-1})
\end{equation*}

Since $x_t = x_{t-1} - \eta M_{t-1}$, we Taylor expand the gradient:
\begin{equation*}
\nabla F(x_t) = \nabla F(x_{t-1}) - \eta \nabla^2 F(x_{t-1}) M_{t-1} + O(\eta^2 m_{t-1}^2)
\end{equation*}

Projecting onto the momentum direction and using $M_{t-1} = m_{t-1} v_{t-1}$:

\begin{align*}
v_{t-1}^T \nabla F(x_t) &= v_{t-1}^T \nabla F(x_{t-1}) - \eta m_{t-1} \kappa_M(x_{t-1}) \\
&\quad + O(\eta^2 m_{t-1})
\end{align*}

where $\kappa_M(x) = v^T \nabla^2 F(x) v$ is the directional curvature. Substituting this into the momentum change and dividing by $m_{t-1}$:
\begin{align*}
\frac{m_t - m_{t-1}}{m_{t-1}} &= (\mu-1) + \frac{v_{t-1}^T \nabla F(x_{t-1})}{m_{t-1}} \\
&\quad - \eta\kappa_M(x_{t-1}) + O(\eta^2)
\end{align*}

Taking the absolute value and defining $B_t = (1-\mu) - \frac{v_{t-1}^T \nabla F(x_{t-1})}{m_{t-1}}$ gives the result. The complete proof appears in Appendix~\ref{app:proof-theorem1}.
\end{proof}

A key design choice in Theorem~\ref{thm:rmmc-curvature} is measuring curvature $\kappa_M(x) = v^T \nabla^2 F(x) v$ along the momentum direction $v = M/\|M\|$ rather than the instantaneous gradient direction $\nabla F/\|\nabla F\|$. The momentum direction is substantially less noisy due to exponential averaging, with variance reduced by approximately $(1-\mu^2)^{-1}$ compared to instantaneous gradients~\citep{sutskever2013importance}. Beyond noise reduction, the momentum vector $M$ encodes the optimizer's effective trajectory by incorporating historical information about the loss landscape rather than instantaneous local information from a single mini-batch~\citep{kunstner2024momentum}. For distributed training where workers operate on different data subsets and must detect optimization events requiring coordination based on local information, curvature measured along this stable, trajectory-aligned momentum direction provides a more reliable synchronization signal than gradient-based curvature measures~\citep{kunstner2024momentum}.

Theorem~\ref{thm:rmmc-curvature} also reveals that RMMC is a composite signal encoding both geometric complexity through directional curvature $\kappa_M$ and optimization dynamics through the bias term $B_t$. The bias term $B_t = (1-\mu) - \frac{v_{t-1}^T \nabla F(x_{t-1})}{m_{t-1}}$ quantifies deviation from momentum-gradient equilibrium. In steady state, the momentum update $M = \mu M + \nabla F$ implies $M = \frac{\nabla F}{1-\mu}$, so that $m = \frac{\|\nabla F\|}{1-\mu}$. With momentum-gradient alignment where $v^T \nabla F \approx \|\nabla F\|$, we obtain:
\begin{equation*}
B_t \approx (1-\mu) - \frac{\|\nabla F\|}{m} \approx (1-\mu) - \frac{\|\nabla F\|}{\|\nabla F\|/(1-\mu)} = 0
\end{equation*}

When $B_t \approx 0$, momentum has stabilized and $\text{RMMC}_t \approx \eta|\kappa_M(x_{t-1})|$ directly measures curvature. Conversely, when $|B_t|$ is large, the system is far from equilibrium, indicating transient dynamics during momentum buildup or gradient direction changes. Thus RMMC is low when the landscape is flat and momentum is equilibrated, but high when either significant curvature or dynamic instability is present. This dual sensitivity eliminates the need to distinguish between curvature-driven and dynamics-driven synchronization events, as both benefit from worker coordination. 

\section{Numerical Results}
\label{sec:experiments}

We evaluate CurvaDion on language model pretraining using a 160M-parameter GPT-style transformer \citep{radford2019language} with 12 layers, 768-dimensional embeddings, and 6 attention heads. The model uses rotary position embeddings \citep{su2021roformer}, RMS normalization \citep{zhang2019root}, and squared ReLU activations \citep{so2021primer}. We train on FineWeb \citep{penedo2024fineweb} with sequence length 1024, global batch size 1024 distributed across workers with per-device batch size 32 using PyTorch's Distributed Data Parallel (DDP) \citep{li2020pytorch}. This training was done on a single 8xH100 node. Training uses learning rate $\eta = 0.02$, momentum $\mu = 0.95$, weight decay 0.01, and Dion rank fraction $r = 0.125$. All experiments run for 3000 iterations with 20\% warmdown, using mixed precision (bfloat16) \citep{micikevicius2017mixed}.

\subsection{Convergence Quality and Communication Overhead}
\label{sec:main_results}

We demonstrate that CurvaDion achieves convergence quality equivalent to baseline Dion while dramatically reducing communication overhead. Figure~\ref{fig:main_results} (left) shows final validation perplexity across threshold values $\tau \in [0.1, 0.9]$. CurvaDion maintains perplexity $\approx 28.98$ across all thresholds, matching Dion's baseline performance. This demonstrates that adaptive synchronization based on RMMC preserves convergence quality despite synchronizing substantially less frequently than the baseline, which coordinates workers at every training step.

The communication savings are substantial. Figure~\ref{fig:main_results} (right) quantifies per-step communication volume for Dion and CurvaDion with various thresholds. Baseline Dion requires 619 MB per training step to synchronize workers. CurvaDion reduces this to 18.6 MB/step for $\tau = 0.1$ (97\% reduction), 5.2 MB/step for $\tau = 0.3$ (99\% reduction), and 4.3 MB/step for $\tau = 0.7$ (99\% reduction). Over the full 3000-step training run, this translates to total communication of 1813 GB for Dion versus 13-54 GB for CurvaDion depending on threshold choice, representing up to 143$\times$ reduction in total communication volume. These results confirm that CurvaDion's adaptive synchronization mechanism successfully identifies when worker coordination is necessary, concentrating communication during critical optimization phases while eliminating redundant synchronization in stable regions.

\begin{figure*}[t]
    \centering
    \includegraphics[width=0.48\textwidth]{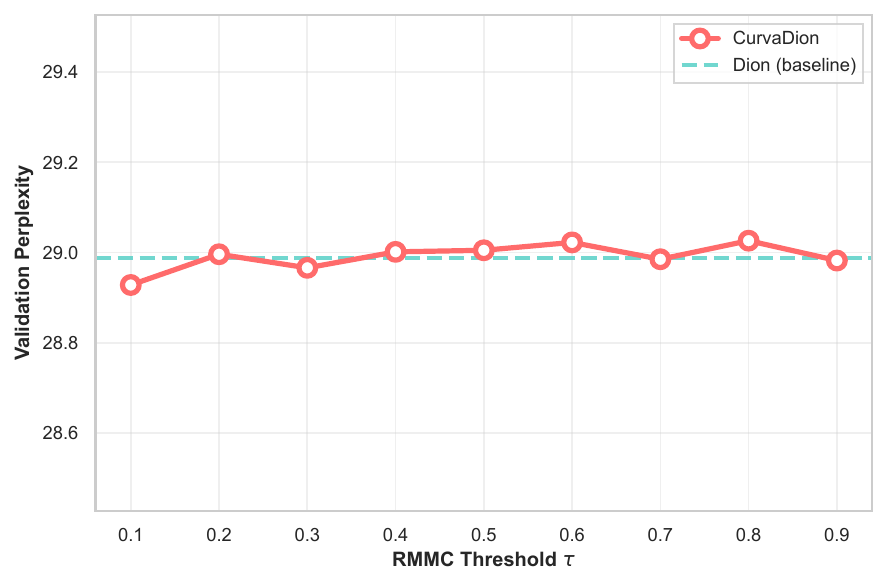}
    \hfill
    \includegraphics[width=0.48\textwidth]{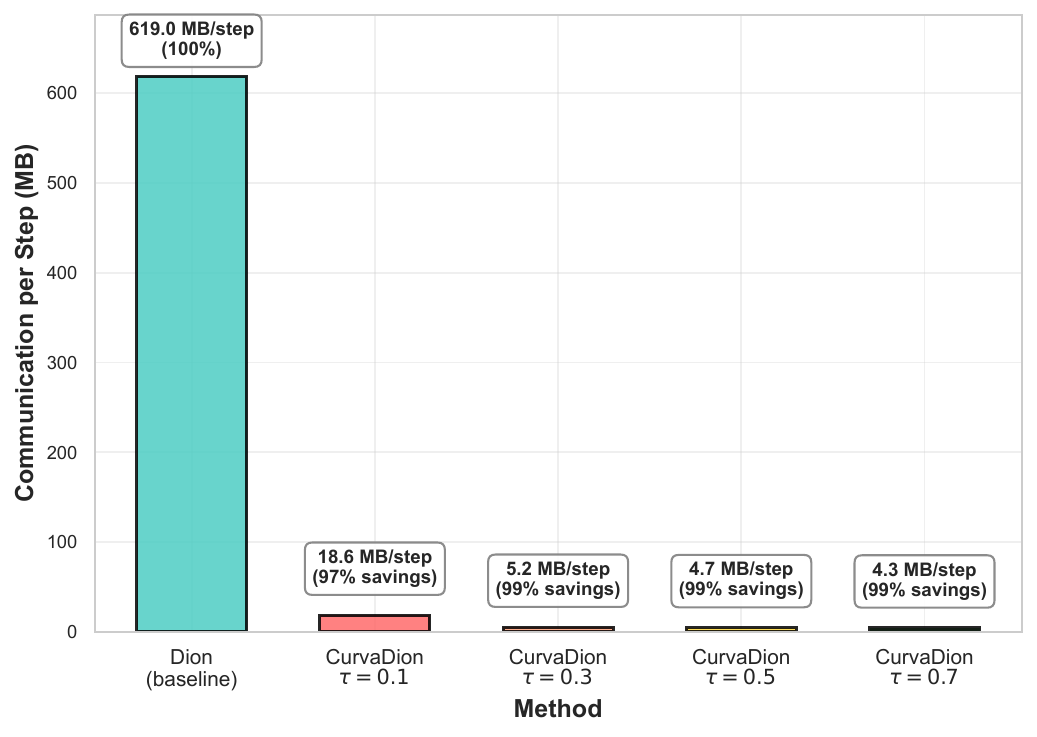}
    \caption{\textit{Left:} Final validation perplexity across RMMC thresholds $\tau$. CurvaDion matches baseline Dion performance ($\approx 28.98$) across all thresholds. \textit{Right:} Per-step communication volume. CurvaDion achieves 97-99\% reduction compared to baseline Dion's 619 MB/step.}
    \label{fig:main_results}
\end{figure*}

\subsection{Performance at Scale}
\label{sec:scaling}

To evaluate CurvaDion's effectiveness at production scale, we conduct experiments across three model sizes spanning 160M to 1.3B parameters, with training data ranging from 3.1B to 100B tokens from FineWeb. We fix the threshold $\tau = 0.5$ across all configurations to demonstrate robustness without per-scale tuning. All experiments maintain consistent hyperparameters: learning rate $\eta=0.02$ with 1\% warmup and 10\% warmdown, momentum $\mu=0.95$, weight decay $0.01$, and Dion rank fraction $r=0.125$. The experimental configurations are detailed in Table~\ref{tab:scale_configs}.

\begin{table}[t]
\centering
\label{tab:scale_configs}
\begin{tabular}{lccc}
\toprule
\textbf{Model} & \textbf{$d_{\text{model}}$ / Layers / Heads} & \textbf{Steps} & \textbf{Tokens} \\
\midrule
160M  & 768 / 12 / 6   & 3,000  & 3.1B \\
350M  & 1024 / 24 / 32 & 9,000  & $\sim 20B$ \\
1.3B  & 2048 / 24 / 32 & 47,000 & $ \sim100B$ \\
\bottomrule
\end{tabular}
\caption{Scaling study configurations. All models use global batch size 2.1M tokens (160M uses 1.0M), learning rate $\eta=0.02$, momentum $\mu=0.95$, and Dion rank fraction $r=0.125$.}
\end{table}

\begin{table*}[t]
\centering
\label{tab:scale_results}
\begin{tabular}{lccccccc}
\toprule
\textbf{Model} & \begin{tabular}{@{}c@{}}\textbf{CurvaDion}\\\textbf{Val PPL}\end{tabular} & \begin{tabular}{@{}c@{}}\textbf{Dion}\\\textbf{Val PPL}\end{tabular} & \begin{tabular}{@{}c@{}}\textbf{Sync}\\\textbf{Rate}\end{tabular} & \begin{tabular}{@{}c@{}}\textbf{Comms}\\\textbf{Dion}\end{tabular} & \begin{tabular}{@{}c@{}}\textbf{Comms}\\\textbf{CurvaDion}\end{tabular} & \begin{tabular}{@{}c@{}}\textbf{Comms}\\\textbf{Saved}\end{tabular} \\
\midrule
160M  & 28.97 & 28.98 & 1.8\% & 1,813 GB  & 14 GB  & 99.2\% \\
350M  & 19.75 & 19.73 & 1.2\% & 13,905 GB  & 94 GB  & 99.3\% \\
1.3B  & 14.61 & 13.84 & 1.0\% & 216,044 GB & 2,160 GB & 99.0\% \\
\bottomrule
\end{tabular}
\caption{Scaling results with $\tau = 0.5$. CurvaDion matches baseline Dion convergence while dramatically reducing communication overhead. Communication volumes are cumulative over the full training run.}
\end{table*}

Table~\ref{tab:scale_results} shows that CurvaDion achieves $\approx 99\%$communication reduction while matching baseline Dion convergence quality. The 1.3B model trained on 100B tokens is particularly significant, as this configuration is comparable to publicly released models like OPT-1.3B trained on 180B tokens~\citep{zhang2022opt}.

Full training and validation curves for all three scales appear in Appendix~\ref{app:scaling_runs}.

\subsection{Synchronization Dynamics}
\label{sec:sync_dynamics}

We demonstrate that when workers synchronize matters more than how often by comparing CurvaDion against Scheduled Sync Dion, a DiLoCo-inspired baseline~\cite{douillard2023diloco} that runs local Dion updates for $H$ steps before synchronizing. Specifically, Scheduled Sync Dion follows the same algorithm as CurvaDion (Algorithm~2) but replaces the RMMC-based triggering condition (line 11) with a fixed schedule: workers perform local updates (lines 22-23) for $H-1$ steps, then execute the full Dion synchronization protocol (lines 13-19) on every $H$-th step. We compare against Scheduled Sync Dion rather than other adaptive synchronization methods~\cite{Tyagi_2023,douillard2023diloco} because those methods are fundamentally incompatible with Dion's orthonormalized momentum structure. Preliminary experiments showed that applying adaptive synchronization rules from both SelSync~\cite{Tyagi_2023} and DiLoCo~\cite{douillard2023diloco} to Dion results in training divergence (final validation perplexity $>100$ vs. 28.97 for CurvaDion). This divergence occurs because existing adaptive methods do not account for the correlation structure in orthonormalized updates, where desynchronization in the low-rank subspace causes instabilities during re-orthogonalization. CurvaDion's momentum-based triggering naturally adapts to these dynamics. By using Scheduled Sync Dion as our baseline, we isolate synchronization timing as the primary variable while maintaining Dion's synchronization mechanism. We focus on $H=100$ (syncing every 100 steps, approximately 1\% sync rate) compared to CurvaDion with $\tau = 0.3$ (also achieving $\sim$1\% sync rate), ensuring matched communication budgets.

\begin{figure}[t]
    \centering
    \includegraphics[width=\columnwidth]{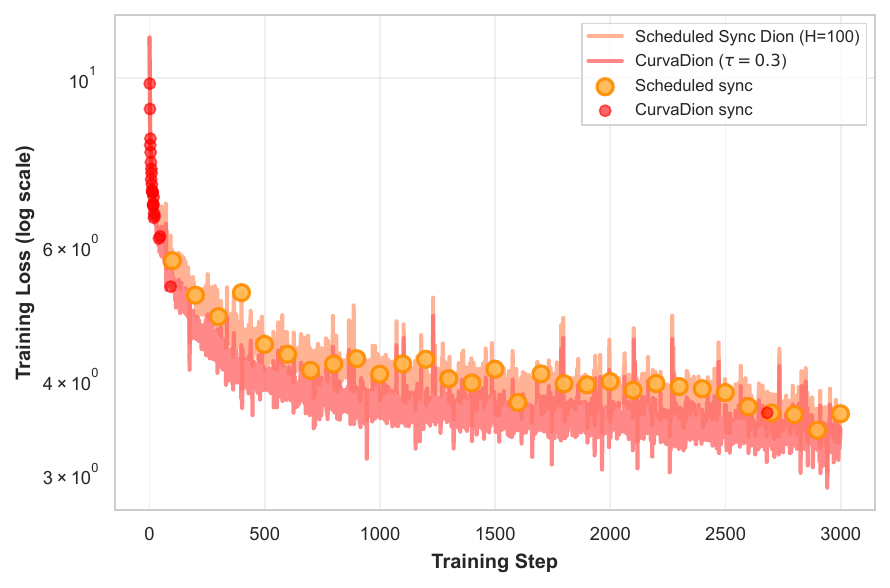}
    \vspace{-2mm}
    \includegraphics[width=\columnwidth]{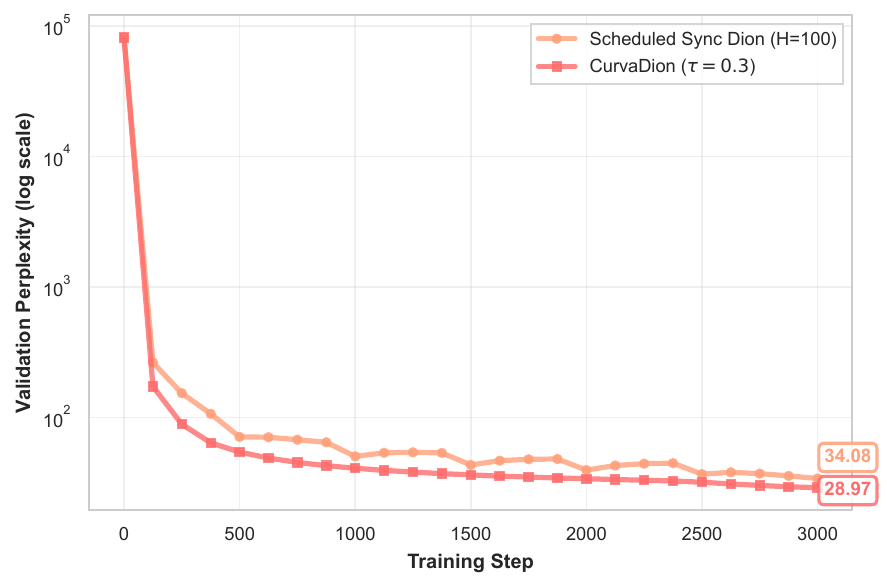}
    \caption{\textbf{Adaptive synchronization outperforms fixed schedules at matched communication budgets.} Scheduled Sync Dion ($H=100$) syncs uniformly every 100 steps while CurvaDion ($\tau=0.3$) adapts to optimization dynamics. Both methods sync $\sim$1\% of steps, but CurvaDion achieves 28.97 final perplexity versus 34.08 for Scheduled Sync Dion.}
    \label{fig:scheduled_comparison}
\end{figure}

Figure~\ref{fig:scheduled_comparison} reveals that despite nearly identical synchronization frequencies, CurvaDion achieves substantially better convergence with final validation perplexity of 28.97 versus 34.08 for Scheduled Sync Dion. This 5.1 point gap demonstrates that the placement of synchronization events, not merely their frequency, determines convergence quality. This pattern generalizes across a wide range of synchronization budgets: Appendix~\ref{app:scheduled_sync} shows that as scheduled synchronization frequency decreases from $H=10$ to $H=250$, performance degrades monotonically from 32.09 to 35.18 perplexity, while CurvaDion maintains consistent $\sim$29 perplexity across all threshold values by adaptively concentrating communication during optimization-critical phases. The training loss curves show similar overall trajectories, but the temporal distribution of synchronization events differs dramatically. Scheduled Sync Dion maintains uniform 100-step intervals regardless of optimization dynamics, visible as evenly spaced markers throughout training. In contrast, CurvaDion concentrates synchronizations heavily during the first 500 steps with dense clusters, then syncs sparsely during the stable convergence phase from steps 500-3000. This adaptive placement ensures worker coordination during critical optimization phases while eliminating redundant communication in flat regions where workers naturally compute similar updates.

The mechanism underlying this adaptive behavior becomes clear when examining RMMC dynamics in Figure~\ref{fig:rmmc_dynamics}. Theorem~1 establishes that $\text{RMMC}_t = |\eta\kappa_M(x_{t-1}) + B_t|$ where $\kappa_M$ measures directional curvature and $B_t = (1-\mu) - \frac{v_{t-1}^T \nabla F(x_{t-1})}{\|M_{t-1}\|}$ quantifies momentum-gradient disequilibrium. During early training (steps 0-200), average RMMC values decay rapidly from $>10$ to $<1$ across all thresholds. This initial spike reflects large $|B_t|$ as momentum builds from initialization and the optimizer navigates transient dynamics before reaching a stable trajectory. The bias term $B_t$ dominates during this phase because momentum has not yet equilibrated with the gradient scale, causing frequent threshold crossings that trigger the dense synchronization pattern observed in Figure~\ref{fig:scheduled_comparison}.

\begin{figure}[t]
    \centering
    \includegraphics[width=\columnwidth]{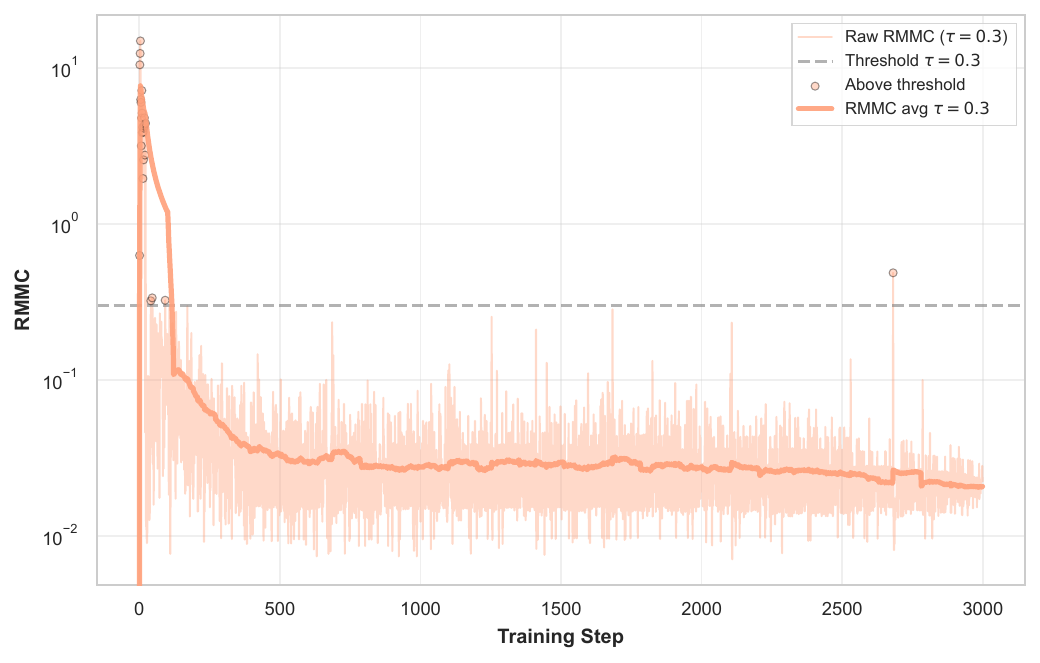}
    \caption{Raw RMMC values show high variance with occasional spikes triggering synchronization ($\tau=0.3$). Average RMMC reveals two-phase behavior: rapid decay during early transient dynamics (steps 0-200, dominated by bias term $B_t$), then stable low values with curvature-driven spikes (steps 200-3000, dominated by $\kappa_M$).}
    \label{fig:rmmc_dynamics}
\end{figure}

As training progresses beyond step 200, average RMMC stabilizes near $0.1$ as momentum equilibrates ($B_t \to 0$) and the optimization enters flatter regions. However, the raw RMMC signal reveals occasional sharp spikes above the threshold $\tau=0.3$ even during this stable phase. These spikes, now driven primarily by the curvature term $\eta\kappa_M$ rather than the bias term $B_t$, trigger synchronizations precisely when the loss landscape exhibits high curvature along the momentum direction. This dual-phase behavior validates our theoretical framework: RMMC naturally concentrates synchronization during both early transient dynamics (high $|B_t|$) and later curvature-driven regime changes (high $|\kappa_M|$), eliminating the need to distinguish between these scenarios. Fixed-interval methods like Scheduled Sync Dion cannot adapt to these optimization dynamics, leading to wasted communication in stable regions and potential worker divergence during high-curvature phases between scheduled syncs.

\subsection{Wall Clock Analysis}
\label{sec:wall_clock_analysis}

To validate the practical impact of our communication savings, we conducted wall clock timing measurements on a two-node distributed training setup. We trained the GPT-160M model for 3000 steps over InfiniBand/RoCE interconnect (measured latency $\sim$0.1ms) with detailed per-step timing instrumentation. Each training step comprises three phases: computation (16 gradient accumulation micro-batches of forward and backward passes), communication (gradient synchronization), and overhead (learning rate scheduling and gradient zeroing).

\begin{figure}[h]
    \centering
    \includegraphics[width=\columnwidth]{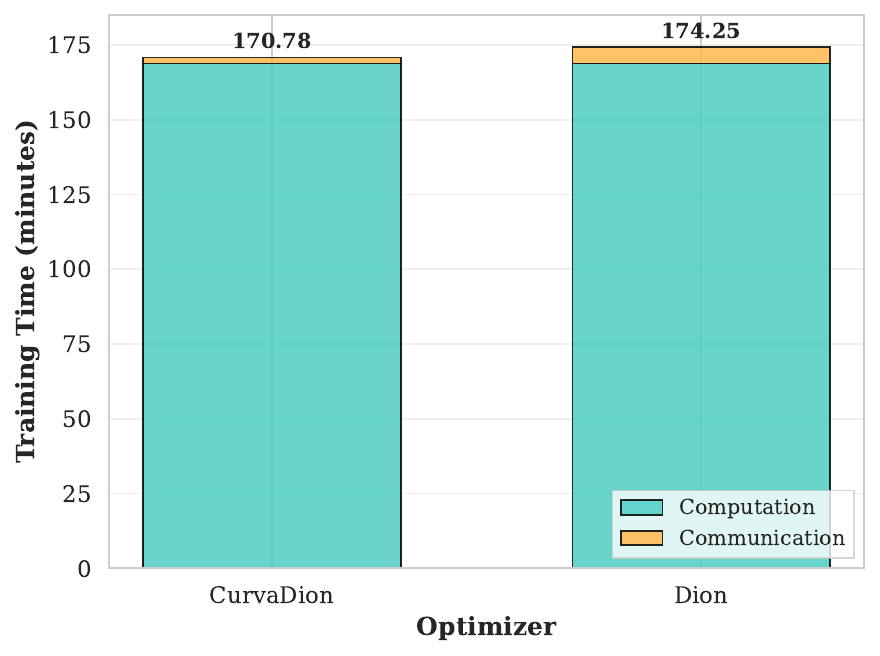}
    \caption{Total training time over 3000 steps. CurvaDion completes training in 170.78 minutes compared to 174.25 minutes for baseline Dion, achieving a 1.02$\times$ speedup (3.47 minutes saved). The modest speedup reflects fast InfiniBand interconnect where gradient synchronization (70ms) represents only 2\% of total step time (3485ms), with computation (3375ms, 96.8\%) dominating.}
    \label{fig:wall_clock_total}
\end{figure}
    
\begin{figure}[h]
    \centering
    \includegraphics[width=\columnwidth]{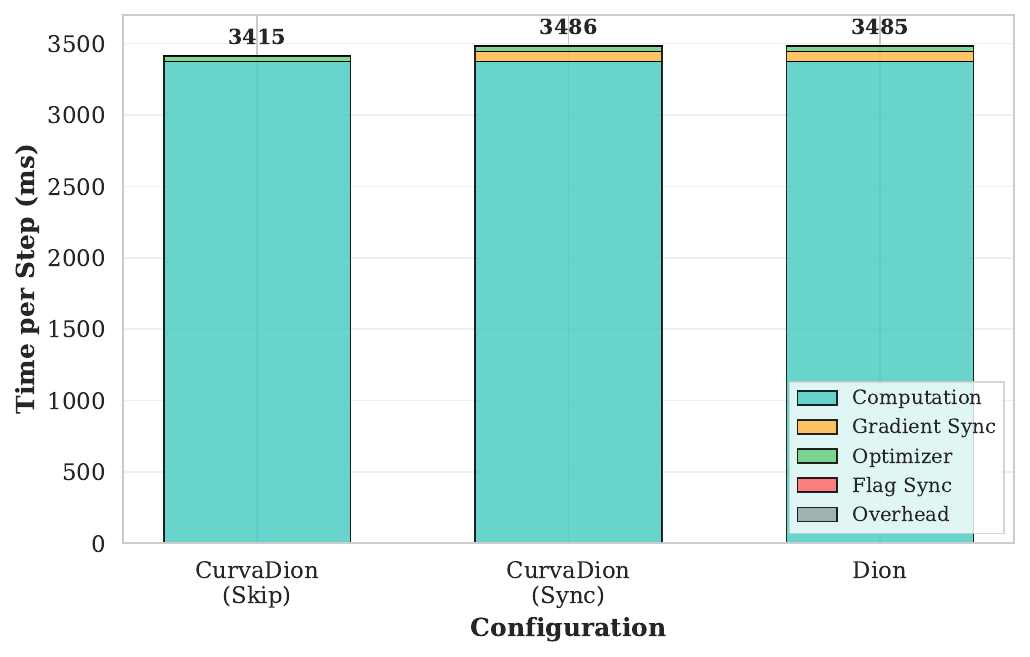}
    \caption{Per-step time breakdown across configurations. Skip steps require 3415ms with only 0.5ms flag overhead (0.015\% of step time), while sync steps require 3486ms including 70ms gradient all-reduce. At 1\% synchronization rate, CurvaDion achieves 99.3\% communication time reduction (70ms $\rightarrow$ 0.5ms) with negligible overhead.}
    \label{fig:wall_clock_breakdown}
\end{figure}

Figure~\ref{fig:wall_clock_total} shows the total training time over 3000 steps. CurvaDion completes training in 170.78 minutes compared to 174.25 minutes for baseline Dion, achieving a 1.02$\times$ speedup and saving 3.47 minutes. While this improvement appears modest, it directly reflects the network characteristics: on our fast InfiniBand connection, gradient synchronization requires only 70ms per step, representing merely 2\% of the total 3485ms step time. The dominant cost is computation (3375ms, or 96.8\%), which remains constant across both methods.

Figure~\ref{fig:wall_clock_breakdown} provides a detailed per-step breakdown. When CurvaDion skips synchronization, step time is 3415ms which consists of 3375ms computation, 39ms optimizer updates, and critically, only 0.5ms for the 4-byte flag all-reduce. This demonstrates that our skip decision mechanism introduces negligible overhead (0.015\% of step time). When synchronization is required, CurvaDion's step time increases to 3486ms, with the additional 70ms for gradient all-reduce across 640MB of parameters. At our measured 1\% synchronization rate, CurvaDion skips the 70ms communication cost in 99\% of steps while incurring only the 0.5ms flag overhead, yielding a 99.3\% reduction in communication time (70ms $\rightarrow$ 0.5ms).

\begin{table}[t]
\centering
\label{tab:network_projection}
\begin{tabular}{lccc}
\toprule
\textbf{Network Type} & \textbf{Latency} & \textbf{Sync Time} & \textbf{Speedup} \\
\midrule
InfiniBand & $\sim$0.1ms & 70ms & \textbf{1.020$\times$} \\
10GbE & $\sim$10ms & 700ms & \textbf{1.202$\times$} \\
Cross-DC WAN & $\sim$50ms & 12000ms & \textbf{4.361$\times$} \\
\bottomrule
\end{tabular}
\caption{\textbf{Projected performance across network configurations.} CurvaDion ($\tau$=0.5) achieves 99.24\% communication reduction (4.7 MB/step vs 619.0 MB/step baseline). Speedup calculated assuming 3375ms computation, 39ms optimizer overhead, and 0.5ms flag synchronization. See Appendix~\ref{app:wall_clock_calc} for detailed calculations.}
\end{table}

Table~\ref{tab:network_projection} demonstrates how CurvaDion's benefits scale with network constraints, projecting performance across three representative network configurations used in production distributed training. The projected synchronization times account for transferring 619 MB of data per step (our 160M parameter model), incorporating both network bandwidth limitations and round-trip latency costs. For InfiniBand/RoCE interconnects common in tightly-coupled GPU clusters~\cite{jeon2023network, li2020pytorch}, we measure 70ms synchronization time, yielding 1.020$\times$ speedup. This modest improvement directly reflects the network characteristics: gradient synchronization represents only 2\% of total step time (70ms out of 3484ms), with computation dominating at 96.9\%. However, for 10 Gigabit Ethernet networks widely deployed in mid-scale training clusters~\cite{jeon2023network}, bandwidth constraints increase synchronization to approximately 700ms per step (17\% of step time), amplifying CurvaDion's speedup to 1.203$\times$. The gains become substantial for cross-datacenter training scenarios~\cite{douillard2023diloco, mahajan2022gandiva}, where high latency (50-100ms RTT) and limited inter-datacenter bandwidth push synchronization costs to $\sim$12 seconds per step. At this network speed, communication dominates training time (78\% of baseline step time), and CurvaDion's 99.24\% communication reduction delivers 4.396$\times$ speedup by transmitting only 4.7 MB instead of 619 MB per step.

These projections demonstrate that CurvaDion's performance improvements scale directly with both model size and training duration. As gradient volumes increase with parameter count, communication costs grow proportionally across all network configurations, amplifying the relative benefit of adaptive synchronization. Importantly, these efficiency gains incur no algorithmic cost: convergence quality remains identical to full synchronous training (Table~\ref{tab:scale_results}), while the adaptive triggering mechanism introduces negligible overhead (0.5~ms flag all-reduce, 0.015\% of step time).

An important property of our momentum-based triggering mechanism is its robustness to hyperparameter variations. We conducted additional experiments varying global batch sizes ($\{512, 1024, 2048, 4096, 8192\}$ tokens) and Dion rank fractions ($r \in \{1/16, 1/8, 1/4\}$), presented in Appendix~\ref{app:hyperparameter_sweeps}. Across these configurations, CurvaDion maintained consistent synchronization behavior and convergence quality without requiring threshold recalibration. This robustness stems from the adaptive nature of the RMMC criterion: as batch size increases (reducing gradient noise) or rank fraction changes (affecting update quality), the momentum dynamics naturally reflect these regime shifts, automatically adjusting synchronization frequency to match the optimization landscape rather than requiring manual tuning for each configuration.

\section{Related Work}
Standard data-parallel training requires full gradient synchronization at every step~\citep{li2020pytorch, goyal2017accurate}, accumulating substantial communication overhead as models scale. This bottleneck has driven extensive research into communication reduction. Gradient compression methods reduce per-step volume through quantization~\citep{alistarh2017qsgd, seide20141bit, wen2017terngrad, bernstein2018signsgd} and sparsification~\citep{lin2018deep, stich2018sparsified, wangni2018gradient}, often with error feedback~\citep{karimireddy2019error} to maintain convergence. Low-rank approximations including PowerSGD~\citep{vogels2019powersgd}, GradZip~\citep{agarwal2020gradzip}, and Dion~\citep{ahn2025dion} achieve further compression through subspace projection. Orthogonal approaches reduce synchronization frequency: Local SGD~\citep{stich2018local, lin2020dont, yu2019parallel} and federated methods~\citep{mcmahan2017fedavg, karimireddy2020scaffold} perform local updates between periodic coordination, while asynchronous methods~\citep{dean2012large, recht2011hogwild, lian2015asynchronous} eliminate blocking entirely at the cost of stale gradients. Recent work extends periodic synchronization to cross-datacenter training~\citep{douillard2023diloco} and proposes adaptive schedules based on update significance~\citep{Tyagi_2023} or quadratic synchronization rules~\citep{ortiz2023quadratic}, though these require careful per-configuration tuning. Other distributed optimization methods achieve extreme communication reduction through architecture-agnostic approaches: DisTrO~\citep{peng2024distro} reduces inter-GPU communication by 4-5 orders of magnitude while matching standard convergence, and DeMo~\citep{safaryan2024demo} leverages decentralized momentum to enable training over bandwidth-constrained networks. 

While these methods optimize communication patterns, CurvaDion takes a fundamentally different approach by leveraging optimizer internals to detect when synchronization is necessary. Classical momentum methods~\citep{polyak1964some, nesterov1983method, sutskever2013importance} reduce gradient variance through exponential averaging, and adaptive optimizers including AdaGrad~\citep{duchi2011adaptive}, RMSprop~\citep{tieleman2012rmsprop}, Adam~\citep{kingma2015adam}, AdamW~\citep{loshchilov2019adamw}, and recent variants~\citep{chen2023lion, shazeer2018adafactor} adjust per-parameter learning rates, yet these optimizer dynamics have not been exploited to inform distributed synchronization decisions. Muon~\citep{jordan2024muon} applies orthogonalization to momentum updates, achieving strong results in LLM training but while momentum-based optimizers have proven effective, understanding when they traverse difficult regions of the loss landscape remains challenging. Loss landscape geometry, particularly curvature, fundamentally impacts both generalization~\citep{hochreiter1997flat, keskar2017large} and training dynamics~\citep{gilmer2021loss, cohen2021gradient, li2018visualizing}, as explicitly demonstrated by sharpness-aware methods~\citep{foret2021sharpnessaware, kwon2021asam, kunstner2024momentum} that seek flat minima. Second-order methods~\citep{martens2015optimizing, grosse2016kroneckerfactored, yao2021adahessian, goldfarb2020practical} can approximate curvature for better conditioning but incur prohibitive computational overhead for frequent evaluation during training. CurvaDion bridges this gap by recognizing that momentum dynamics themselves encode information about directional curvature along the optimization trajectory enabling geometry-aware synchronization through a zero-cost proxy that requires no second-order computation, unlike compression methods that synchronize every step or fixed schedules that risk divergence in high-curvature regions.

\section{Discussion}
CurvaDion demonstrates that momentum dynamics, already computed during optimization, provide a computationally tractable proxy for detecting high-curvature regions requiring worker synchronization. Our results establish that RMMC-based synchronization achieves 99\% communication reduction while maintaining convergence quality equivalent to full synchronous training across models from 160M to 1.3B parameters trained on up to 100B tokens. The key insight is that synchronization timing matters more than frequency: at matched 1\% synchronization rates, CurvaDion achieves 28.97 validation perplexity while fixed-interval synchronization degrades to 34.08, a difference attributable to adaptive placement during optimization-critical phases. This geometry-aware approach adds negligible overhead (0.5ms flag synchronization per step) while concentrating expensive communication operations precisely when the loss landscape demands coordination. The performance improvements scale directly with network constraints, ranging from 1.02$\times$ speedup on fast InfiniBand to projected 4.36$\times$ speedup for cross-datacenter training. This enables LLM training in bandwidth-constrained environments previously impractical due to communication costs. By eliminating 99\% of synchronization overhead without algorithmic compromises, CurvaDion reduces both operational costs and wall clock time for large-scale training, enabling scalability of trillion-parameter LLMs across geographically distributed datacenters where network bandwidth and latency would otherwise prohibit training.

\newpage
\clearpage
\onecolumn

\bibliographystyle{plainnat}
\bibliography{example_paper}

\newpage
\clearpage
\onecolumn
\appendix

\section{Theoretical Proofs}
\label{app:proof-theorem1}

We provide a complete proof of Theorem~\ref{thm:rmmc-curvature}, establishing the relationship between RMMC and directional curvature. We begin by restating the theorem for completeness.

\setcounter{theorem}{0}
\begin{theorem}[RMMC-Curvature Relationship, restated]
Under $L$-Lipschitz continuous gradients, $\|\nabla F(x) - \nabla F(y)\| \leq L\|x-y\|$ for all $x, y$, and small step size satisfying $\eta\|M_{t-1}\| = o(1/L)$, the Relative Maximum Momentum Change satisfies:
\begin{equation}
\text{RMMC}_t = \left|\eta \kappa_M(x_{t-1}) + B_t\right| + O(\eta^2)
\end{equation}
where $\kappa_M(x) = v^T \nabla^2 F(x) v$ is the directional curvature in the momentum direction $v = M/\|M\|$, and $B_t = (1-\mu) - \frac{v_{t-1}^T \nabla F(x_{t-1})}{\|M_{t-1}\|}$.
\end{theorem}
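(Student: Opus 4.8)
The plan is to expand the relative momentum-norm change to leading order in the step size $\eta$ and then read off the curvature and bias contributions. I would work throughout with the polar decomposition $M_t = m_t v_t$, where $m_t = \|M_t\|$ and $v_t = M_t/\|M_t\|$, starting from the momentum recursion $M_t = \mu M_{t-1} + \nabla F(x_t)$, so that the object of interest is $\text{RMMC}_t = |m_t - m_{t-1}|/m_{t-1}$. The two ingredients are (i) a second-order expansion of $m_t$ in the increment $g := \nabla F(x_t)$ relative to $\mu M_{t-1}$, and (ii) a Taylor expansion of $g$ along the update direction $x_t = x_{t-1} - \eta M_{t-1}$ that brings in the Hessian and hence the directional curvature $\kappa_M$.

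First I would square the recursion to obtain $m_t^2 = \mu^2 m_{t-1}^2 + 2\mu m_{t-1}(v_{t-1}^T g) + \|g\|^2$, factor out $m_{t-1}^2$, and apply $\sqrt{1+x} = 1 + x/2 - x^2/8 + \cdots$. Collecting terms produces $m_t - m_{t-1} = (\mu-1)m_{t-1} + v_{t-1}^T g + \frac{1}{2\mu m_{t-1}}\big(\|g\|^2 - (v_{t-1}^T g)^2\big) + \cdots$, where the quadratic remainder is exactly $\|g_\perp\|^2$, the squared norm of the part of $g$ orthogonal to $v_{t-1}$. Next I would expand $g = \nabla F(x_{t-1}) - \eta\,\nabla^2 F(x_{t-1})M_{t-1} + R$, project onto $v_{t-1}$, and use $M_{t-1} = m_{t-1}v_{t-1}$ together with $\kappa_M(x_{t-1}) = v_{t-1}^T \nabla^2 F(x_{t-1})v_{t-1}$ to get $v_{t-1}^T g = v_{t-1}^T\nabla F(x_{t-1}) - \eta m_{t-1}\kappa_M(x_{t-1}) + v_{t-1}^T R$. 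Substituting into the norm change, dividing by $m_{t-1}$, and regrouping yields
\begin{equation*}
\frac{m_t - m_{t-1}}{m_{t-1}} = -\Big[(1-\mu) - \frac{v_{t-1}^T \nabla F(x_{t-1})}{m_{t-1}}\Big] - \eta\,\kappa_M(x_{t-1}) + (\text{remainder}),
\end{equation*}
so taking absolute values and recognizing the bracket as $B_t$ gives $\text{RMMC}_t = |\eta\,\kappa_M(x_{t-1}) + B_t| + (\text{remainder})$, which is the claimed identity once the remainder is shown to be $O(\eta^2)$.

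The main obstacle, and the step deserving the most care, is establishing that the collected remainder is genuinely $O(\eta^2)$, where the asymptotics are understood as $\eta \to 0$ with the Lipschitz constant and the local scales of $m_{t-1}$ and $\nabla F$ held fixed. Two contributions must be controlled. The gradient Taylor remainder $v_{t-1}^T R$ requires slightly more than the stated $L$-Lipschitz gradient: to write $\nabla F(x_t) = \nabla F(x_{t-1}) - \eta\,\nabla^2 F(x_{t-1})M_{t-1} + R$ with $\|R\| = O(\eta^2)$ one really wants a locally Lipschitz Hessian, or else to interpret $\kappa_M$ through the segment-averaged Hessian and fold its variation into the error; the hypothesis $\eta\|M_{t-1}\| = O(1/L)$ is what keeps the linearization valid and confines the step to the region where the quadratic model dominates. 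The subtler contribution is the orthogonal term $\frac{1}{2\mu m_{t-1}}\|g_\perp\|^2$: since $\|g_\perp\|$ need not be small for arbitrary gradient--momentum configurations, I would track it through $g_\perp = \nabla F(x_{t-1})_\perp - \eta m_{t-1}(\nabla^2 F\,v_{t-1})_\perp + \cdots$ and make explicit the near-equilibrium regime, momentum aligned with the gradient (exactly the regime in which $B_t \approx 0$), where $\|g_\perp\| = O(\eta m_{t-1})$ so that the term is second order after division by $m_{t-1}$. Pinning down this regime, rather than leaving it implicit, is what I regard as the \emph{crux} of promoting the sketch to a rigorous argument.
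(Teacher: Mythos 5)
Your proposal follows essentially the same route as the paper's proof: polar decomposition $M_t = m_t v_t$, squaring the momentum recursion and Taylor-expanding the square root, then Taylor-expanding $\nabla F(x_t)$ along the step $-\eta M_{t-1}$ and projecting onto $v_{t-1}$ to extract $\kappa_M$ and $B_t$. If anything, your handling of the remainder is sharper than the paper's on the two points you flag: keeping the $-x^2/8$ term reduces the quadratic residue to $\|g_\perp\|^2/(2\mu m_{t-1})$ (the paper keeps only $\sqrt{1+x}\approx 1+x/2$ and dismisses the full $\|\nabla F(x_t)\|^2/(2\mu m_{t-1})$ term with an informal ``typical training'' argument), and you are right that bounding the gradient Taylor remainder by $\tfrac{L}{2}\|x_t-x_{t-1}\|^2$ actually uses Lipschitz continuity of the \emph{Hessian} rather than of the gradient as stated in the hypothesis --- a looseness present in the paper's own proof as well.
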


\begin{proof}
We decompose the momentum vector at time $t$ into its magnitude and direction. Let $m_t = \|M_t\|$ denote the momentum magnitude and $v_t = \frac{M_t}{\|M_t\|}$ denote the unit momentum direction, so that:
\begin{equation*}
M_t = m_t v_t
\end{equation*}
Similarly, $M_{t-1} = m_{t-1} v_{t-1}$ where $m_{t-1} = \|M_{t-1}\|$ and $v_{t-1} = \frac{M_{t-1}}{\|M_{t-1}\|}$ with $\|v_{t-1}\| = 1$.

The momentum update rule from Dion \cite{ahn2025dion} is $M_t = \mu M_{t-1} + \nabla F(x_t)$. To analyze how the magnitude changes, we compute the squared norm:
\begin{align*}
\|M_t\|^2 &= \|\mu M_{t-1} + \nabla F(x_t)\|^2 \\
&= (\mu M_{t-1} + \nabla F(x_t))^T(\mu M_{t-1} + \nabla F(x_t)) \\
&= \mu^2 \|M_{t-1}\|^2 + 2\mu M_{t-1}^T \nabla F(x_t) + \|\nabla F(x_t)\|^2 \\
&= \mu^2 m_{t-1}^2 + 2\mu M_{t-1}^T \nabla F(x_t) + \|\nabla F(x_t)\|^2
\end{align*}

We factor out $m_{t-1}^2$ to isolate the relative change:
\begin{align*}
m_t^2 &= m_{t-1}^2 \left[\mu^2 + \frac{2\mu M_{t-1}^T \nabla F(x_t)}{m_{t-1}^2} + \frac{\|\nabla F(x_t)\|^2}{m_{t-1}^2}\right]
\end{align*}

Taking the square root:
\begin{align*}
m_t &= m_{t-1} \sqrt{\mu^2 + \frac{2\mu M_{t-1}^T \nabla F(x_t)}{m_{t-1}^2} + \frac{\|\nabla F(x_t)\|^2}{m_{t-1}^2}}
\end{align*}

Under our small step size assumption, $\|\nabla F(x_t)\| = O(m_{t-1})$ and the terms inside the square root satisfy $\left|\frac{2\mu M_{t-1}^T \nabla F(x_t)}{m_{t-1}^2}\right|, \left|\frac{\|\nabla F(x_t)\|^2}{m_{t-1}^2}\right| = O(1)$. We apply the Taylor approximation $\sqrt{a^2 + \epsilon} = |a|\sqrt{1 + \epsilon/a^2} \approx |a|(1 + \frac{\epsilon}{2a^2})$ for small $\epsilon/a^2$:
\begin{align*}
m_t &\approx m_{t-1}\mu \left[1 + \frac{1}{2\mu^2}\left(\frac{2\mu M_{t-1}^T \nabla F(x_t)}{m_{t-1}^2} + \frac{\|\nabla F(x_t)\|^2}{m_{t-1}^2}\right)\right] \\
&= m_{t-1}\mu + \frac{M_{t-1}^T \nabla F(x_t)}{m_{t-1}} + \frac{\|\nabla F(x_t)\|^2}{2\mu m_{t-1}}
\end{align*}

The final term $\frac{\|\nabla F(x_t)\|^2}{2\mu m_{t-1}}$ is second-order in the step size. To see this, note that $\|\nabla F(x_t)\| = O(\|M_{t-1}\|)$ in typical training and $\|M_{t-1}\| \sim \frac{\|\nabla F\|}{1-\mu}$, giving $\frac{\|\nabla F(x_t)\|^2}{m_{t-1}} = O(\|\nabla F(x_t)\|) = O(\eta m_{t-1})$ after one step. Thus:
\begin{align*}
m_t - m_{t-1} &= \mu m_{t-1} - m_{t-1} + \frac{M_{t-1}^T \nabla F(x_t)}{m_{t-1}} + O(\eta^2 m_{t-1}) \\
&= (\mu - 1)m_{t-1} + \frac{M_{t-1}^T \nabla F(x_t)}{m_{t-1}} + O(\eta^2 m_{t-1})
\end{align*}

Since $M_{t-1} = m_{t-1} v_{t-1}$, we have $M_{t-1}^T = m_{t-1} v_{t-1}^T$, giving:
\begin{align*}
m_t - m_{t-1} &= (\mu - 1)m_{t-1} + \frac{m_{t-1} v_{t-1}^T \nabla F(x_t)}{m_{t-1}} + O(\eta^2 m_{t-1}) \\
&= (\mu - 1)m_{t-1} + v_{t-1}^T \nabla F(x_t) + O(\eta^2 m_{t-1})
\end{align*}

\textbf{Taylor expansion of the gradient $\nabla F(x_t)$}

The parameter update is $x_t = x_{t-1} - \eta M_{t-1}$, so $x_t - x_{t-1} = -\eta M_{t-1}$. Under $L$-Lipschitz continuous gradients, the fundamental theorem of calculus gives:
\begin{align*}
\nabla F(x_t) &= \nabla F(x_{t-1}) + \int_0^1 \nabla^2 F(x_{t-1} + s(x_t - x_{t-1}))(x_t - x_{t-1})\,ds
\end{align*}

We separate the Hessian at $x_{t-1}$ from the remainder:
\begin{align*}
\nabla F(x_t) &= \nabla F(x_{t-1}) + \nabla^2 F(x_{t-1})(x_t - x_{t-1}) \\
&+ \int_0^1 [\nabla^2 F(x_{t-1} + s(x_t - x_{t-1})) - \nabla^2 F(x_{t-1})](x_t - x_{t-1})\,ds
\end{align*}

The remainder term satisfies $\|R\| \leq \int_0^1 Ls\|x_t - x_{t-1}\|^2\,ds = \frac{L}{2}\|x_t - x_{t-1}\|^2$ by the Lipschitz condition. Substituting $x_t - x_{t-1} = -\eta M_{t-1}$:
\begin{align*}
\nabla F(x_t) &= \nabla F(x_{t-1}) + \nabla^2 F(x_{t-1})(-\eta M_{t-1}) + R \\
&= \nabla F(x_{t-1}) - \eta \nabla^2 F(x_{t-1}) M_{t-1} + O(\eta^2 \|M_{t-1}\|^2) \\
&= \nabla F(x_{t-1}) - \eta \nabla^2 F(x_{t-1}) M_{t-1} + O(\eta^2 m_{t-1}^2)
\end{align*}

\textbf{Projecting onto the momentum direction and extracting curvature.}

We substitute the gradient expansion into the momentum change
\begin{align*}
v_{t-1}^T \nabla F(x_t) &= v_{t-1}^T [\nabla F(x_{t-1}) - \eta \nabla^2 F(x_{t-1}) M_{t-1}] + O(\eta^2 m_{t-1}) \\
&= v_{t-1}^T \nabla F(x_{t-1}) - \eta v_{t-1}^T \nabla^2 F(x_{t-1}) M_{t-1} + O(\eta^2 m_{t-1})
\end{align*}

Using $M_{t-1} = m_{t-1} v_{t-1}$:
\begin{align*}
v_{t-1}^T \nabla^2 F(x_{t-1}) M_{t-1} &= v_{t-1}^T \nabla^2 F(x_{t-1}) (m_{t-1} v_{t-1}) \\
&= m_{t-1} (v_{t-1}^T \nabla^2 F(x_{t-1}) v_{t-1})
\end{align*}

We define the directional curvature in the momentum direction as:
\begin{equation*}
\kappa_M(x_{t-1}) := v_{t-1}^T \nabla^2 F(x_{t-1}) v_{t-1}
\end{equation*}

This scalar quantity measures the second derivative of $F$ along the momentum direction. Substituting back:
\begin{align*}
v_{t-1}^T \nabla F(x_t) &= v_{t-1}^T \nabla F(x_{t-1}) \\
&- \eta m_{t-1} \kappa_M(x_{t-1}) + O(\eta^2 m_{t-1})
\end{align*}

Therefore:
\begin{align*}
m_t - m_{t-1} &= (\mu - 1)m_{t-1} + v_{t-1}^T \nabla F(x_{t-1})\\ 
&- \eta m_{t-1} \kappa_M(x_{t-1}) + O(\eta^2 m_{t-1})
\end{align*}

Dividing both sides by $m_{t-1}$:
\begin{align*}
\frac{m_t - m_{t-1}}{m_{t-1}} &= (\mu - 1) + \frac{v_{t-1}^T \nabla F(x_{t-1})}{m_{t-1}} \\
&- \eta \kappa_M(x_{t-1}) + O(\eta^2)
\end{align*}

Rearranging terms:
\begin{align*}
\frac{m_t - m_{t-1}}{m_{t-1}} &= -\eta \kappa_M(x_{t-1}) \\
&+ \left[(\mu - 1) + \frac{v_{t-1}^T \nabla F(x_{t-1})}{m_{t-1}}\right] + O(\eta^2)
\end{align*}

Since $\mu - 1 = -(1-\mu)$, we define the bias term:
\begin{equation*}
B_t := (1-\mu) - \frac{v_{t-1}^T \nabla F(x_{t-1})}{m_{t-1}}
\end{equation*}

This gives:
\begin{align*}
\frac{m_t - m_{t-1}}{m_{t-1}} &= -\eta \kappa_M(x_{t-1}) - B_t + O(\eta^2)
\end{align*}

Taking the absolute value and using the definition $\text{RMMC}_t = \left|\frac{m_t - m_{t-1}}{m_{t-1}}\right|$:
\begin{align*}
\text{RMMC}_t &= \left|-\eta \kappa_M(x_{t-1}) - B_t\right| + O(\eta^2) \\
&= \left|\eta \kappa_M(x_{t-1}) + B_t\right| + O(\eta^2)
\end{align*}

\end{proof}

\newpage
\section{Full Training Curves at Scale}
\label{app:scaling_runs}

We present complete training and validation curves for the scaling test. All experiments use $\tau = 0.5$, learning rate $\eta = 0.02$ with 1\% warmup and 10\% warmdown, momentum $\mu = 0.95$, weight decay 0.01, and Dion rank fraction $r = 0.125$. Training configurations are summarized in Table~\ref{tab:scaling_configs}.

\begin{table}[h]
\centering
\begin{tabular}{lcccc}
\toprule
\textbf{Model} & \textbf{$d_{\text{model}}$ / Layers / Heads} & \textbf{Steps} & \textbf{Total Tokens} & \textbf{Batch Size} \\
\midrule
160M & 768 / 12 / 6 & 3,000 & 3.1B & 1.0M \\
350M & 1024 / 24 / 32 & 9,000 & 18.9B & 2.1M \\
1.3B & 2048 / 24 / 32 & 47,000 & 98.7B & 2.1M \\
\bottomrule
\end{tabular}
\caption{Scaling experiment configurations.}
\label{tab:scaling_configs}
\end{table}

\subsection{160M Parameter Model}

\begin{figure}[h]
\centering
\includegraphics[width=0.48\textwidth]{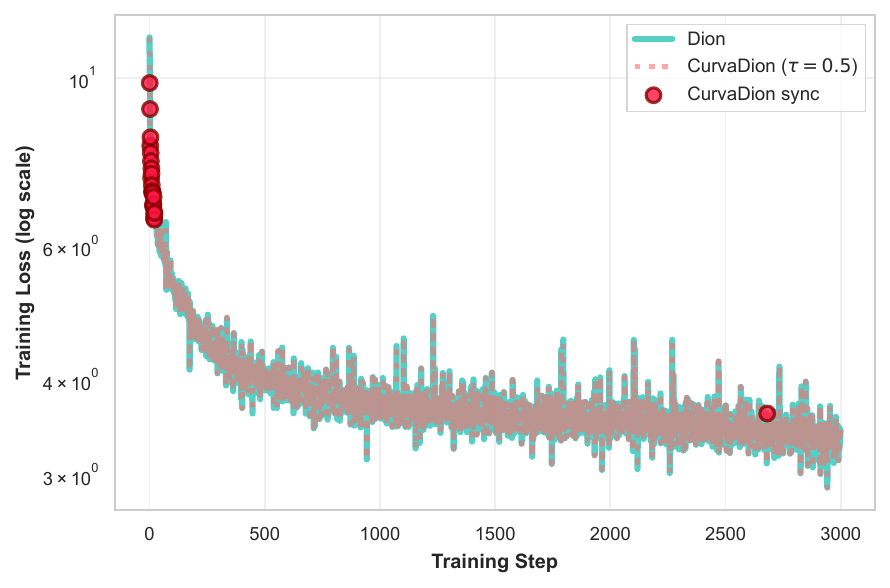}
\includegraphics[width=0.48\textwidth]{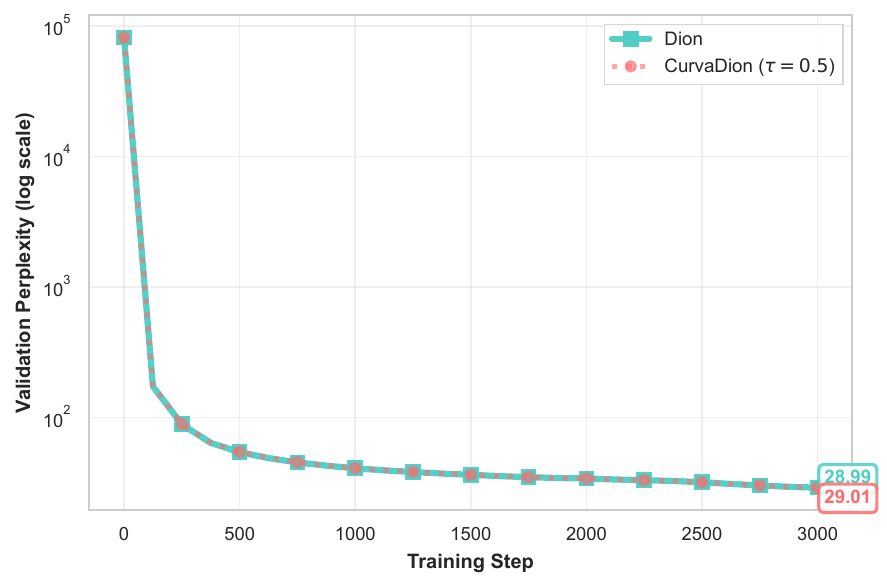}
\caption{Training loss (left) and validation perplexity (right) for 160M model. CurvaDion synchronizations (red markers) concentrate during early training and high-curvature phases, achieving final validation perplexity of 29.01 compared to Dion's 28.99.}
\label{fig:160m_curves}
\end{figure}

\subsection{360M Parameter Model}

\begin{figure}[h]
\centering
\includegraphics[width=0.48\textwidth]{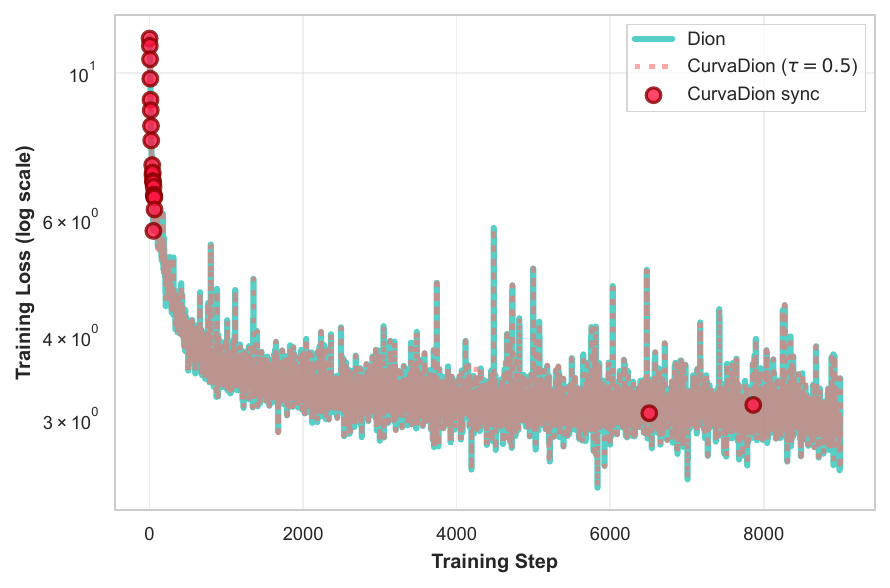}
\includegraphics[width=0.48\textwidth]{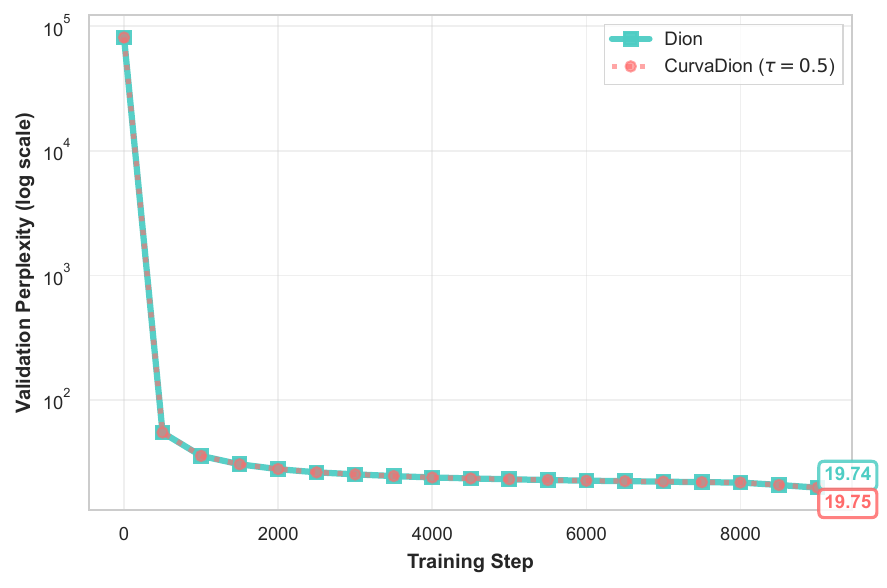}
\caption{Training loss (left) and validation perplexity (right) for 360M model. CurvaDion maintains convergence quality matching baseline Dion (19.75 vs. 19.74 final perplexity) while adaptively concentrating synchronization during optimization-critical phases.}
\label{fig:360m_curves}
\end{figure}

\subsection{1.3B Parameter Model}

\begin{figure}[h]
\centering
\includegraphics[width=0.48\textwidth]{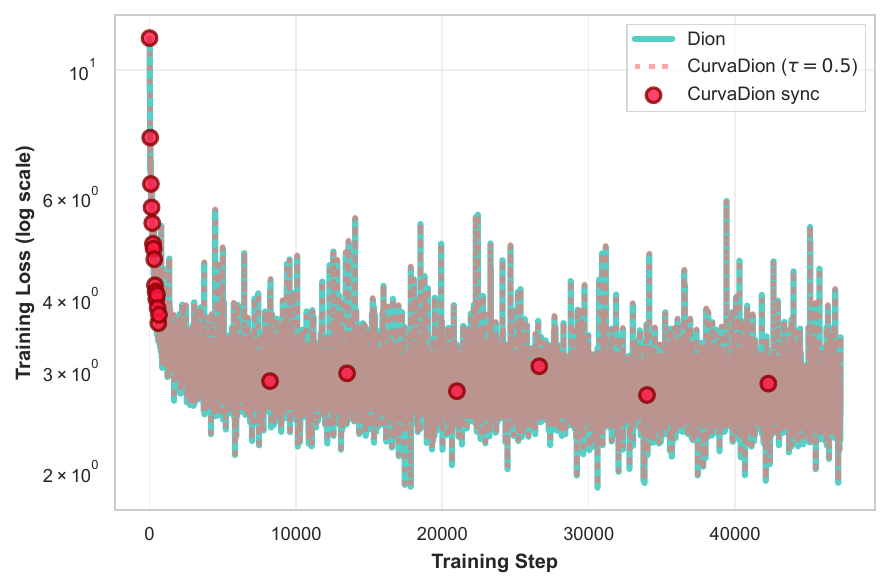}
\includegraphics[width=0.48\textwidth]{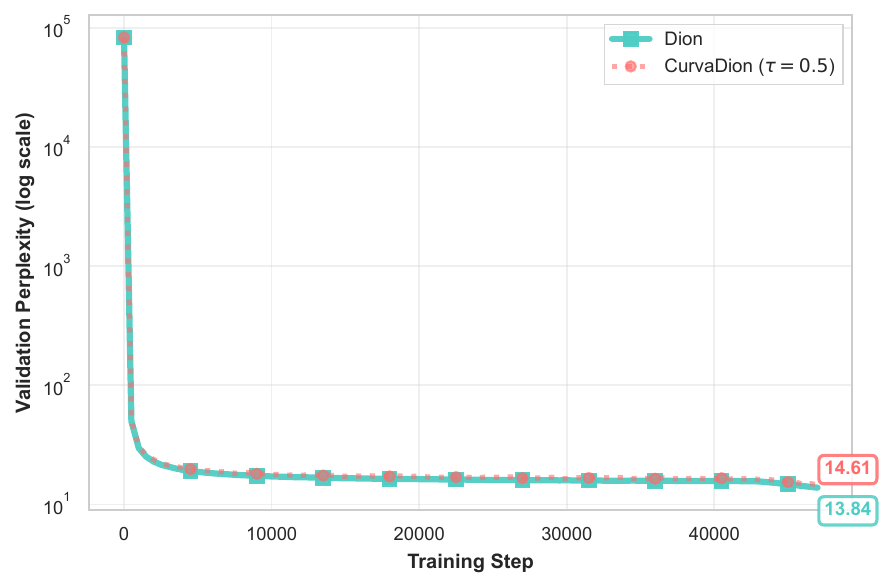}
\caption{Training loss (left) and validation perplexity (right) for 1.3B model. CurvaDion maintains convergence quality matching baseline Dion (13.84 vs. 14.61 final perplexity) while adaptively concentrating synchronization during optimization-critical phases.}
\label{fig:360m_curves}
\end{figure}

\newpage

\newpage
\newpage
\section{Scheduled Synchronization Comparison}
\label{app:scheduled_sync}

We compare CurvaDion against Scheduled Sync Dion across different synchronization frequencies to isolate the effect of adaptive placement versus total communication budget. Scheduled Sync Dion synchronizes workers at fixed intervals every $H$ steps, while CurvaDion adaptively triggers synchronization based on RMMC threshold $\tau$. All experiments use the 160M parameter model trained for 3000 iterations with learning rate $\eta = 0.02$ with 1\% warmup and 20\% warmdown, momentum $\mu = 0.95$, weight decay 0.01, and Dion rank fraction $r = 0.125$.

\subsection{High Synchronization Rate: $H=10$ vs $\tau=0.1$}

\begin{figure}[h]
\centering
\includegraphics[width=0.48\textwidth]{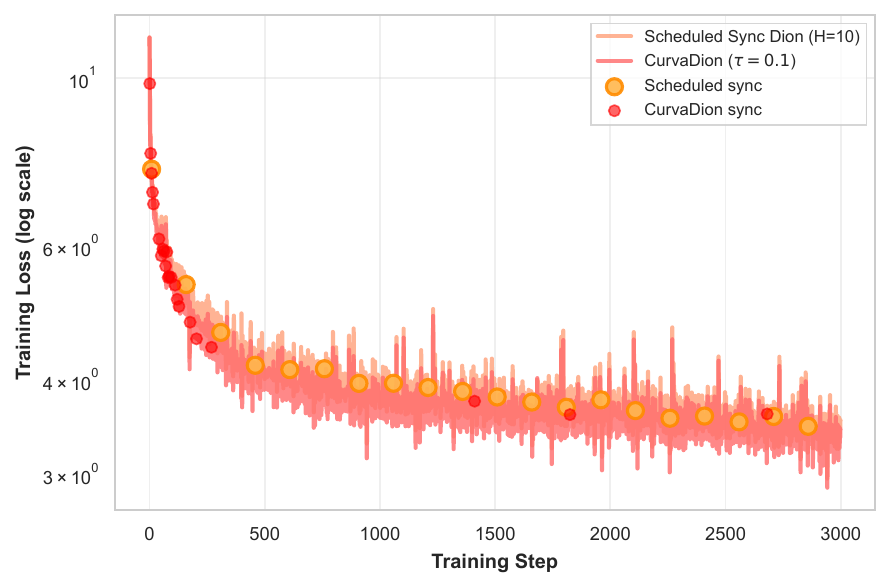}
\includegraphics[width=0.48\textwidth]{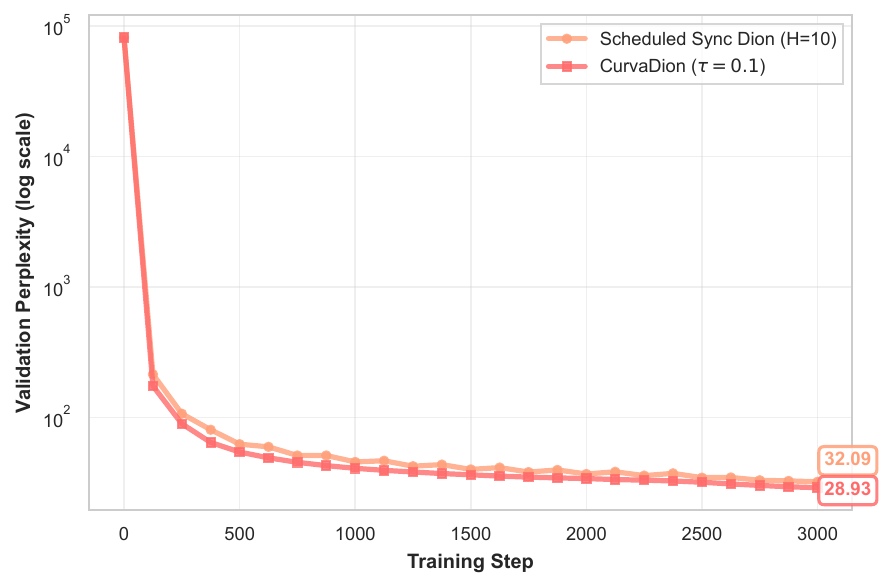}
\caption{Training loss (left) and validation perplexity (right) comparing Scheduled Sync Dion ($H=10$, 10\% sync rate) with CurvaDion ($\tau=0.1$). CurvaDion achieves 28.93 final perplexity compared to 32.09 for scheduled synchronization, demonstrating superior convergence despite similar communication frequency through adaptive placement during critical optimization phases.}
\label{fig:scheduled_h10}
\end{figure}

\subsection{Medium Synchronization Rate: $H=50$ vs $\tau=0.2$}

\begin{figure}[h]
\centering
\includegraphics[width=0.48\textwidth]{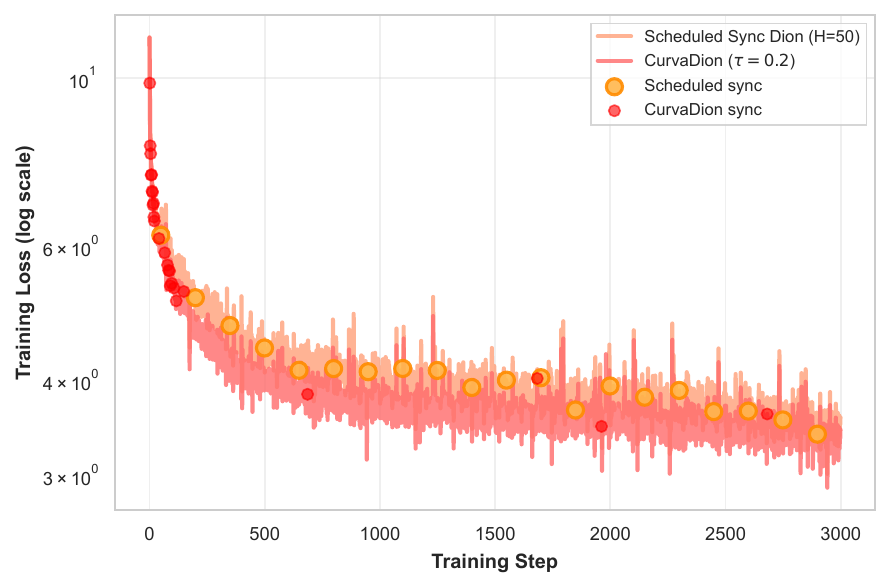}
\includegraphics[width=0.48\textwidth]{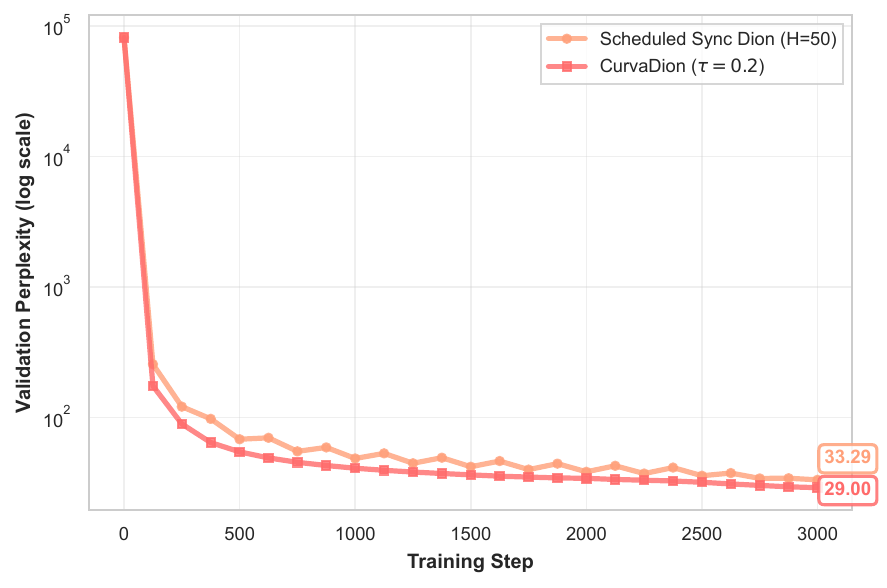}
\caption{Training loss (left) and validation perplexity (right) comparing Scheduled Sync Dion ($H=50$, 2\% sync rate) with CurvaDion ($\tau=0.2$). The performance gap widens to 4.29 perplexity points (33.29 vs. 29.00), as fixed-interval synchronization increasingly misses critical high-curvature regions that require worker coordination.}
\label{fig:scheduled_h50}
\end{figure}

\subsection{Low Synchronization Rate: $H=250$ vs $\tau=0.9$}

\begin{figure}[h]
\centering
\includegraphics[width=0.48\textwidth]{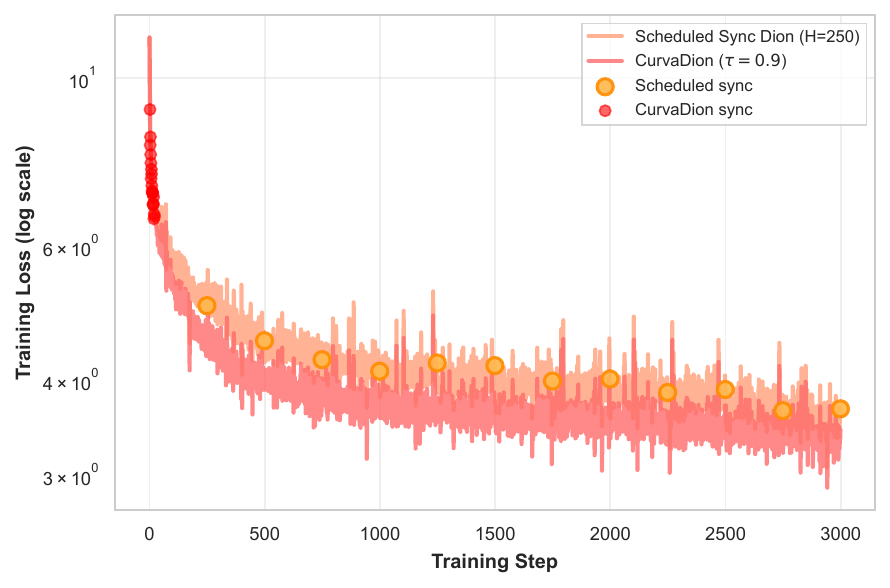}
\includegraphics[width=0.48\textwidth]{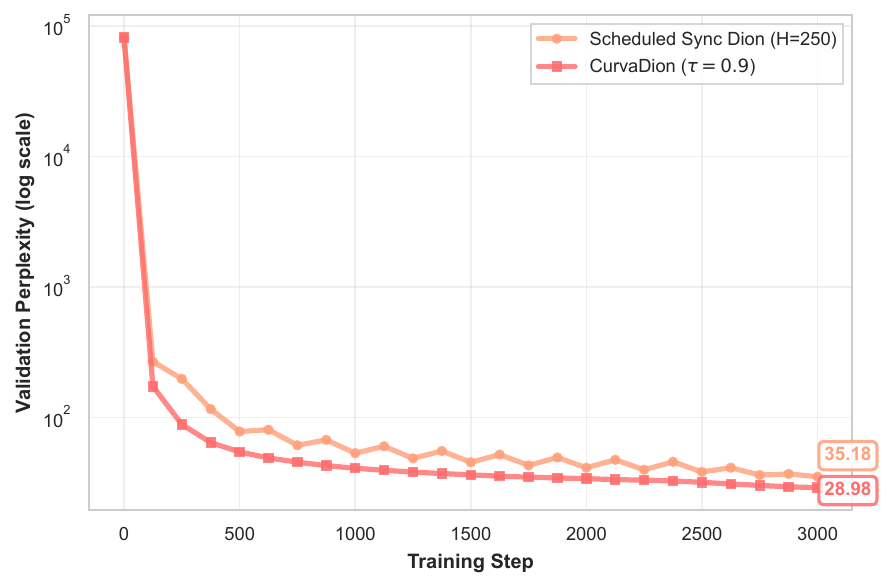}
\caption{Training loss (left) and validation perplexity (right) comparing Scheduled Sync Dion ($H=250$, 0.4\% sync rate) with CurvaDion ($\tau=0.9$). At very low synchronization rates, Scheduled Sync Dion diverges significantly (35.18 perplexity), while CurvaDion maintains convergence quality (28.98 perplexity) by concentrating its limited communication budget during optimization-critical phases.}
\label{fig:scheduled_h250}
\end{figure}

These results reveal that increasing scheduled synchronization frequency (smaller $H$) reduces worker divergence, while decreasing frequency (larger $H$) leads to progressively worse convergence (32.09 $\rightarrow$ 33.29 $\rightarrow$ 35.18 perplexity). In contrast, CurvaDion maintains consistent validation perplexity ($\sim$29) across all threshold values by adaptively placing synchronizations during high-curvature regions, demonstrating that when synchronization occurs matters more than how often it occurs.

\newpage
\section{Hyperparameter Robustness}
\label{app:hyperparameter_sweeps}

We evaluate CurvaDion's robustness across varying batch sizes and Dion rank fractions using the 160M parameter model trained for 3000 iterations with $\tau = 0.5$. All experiments maintain identical hyperparameters from Section~4.1 except for the swept parameter.

\subsection{Batch Size Variations}

\begin{figure}[h]
\centering
\includegraphics[width=0.48\textwidth]{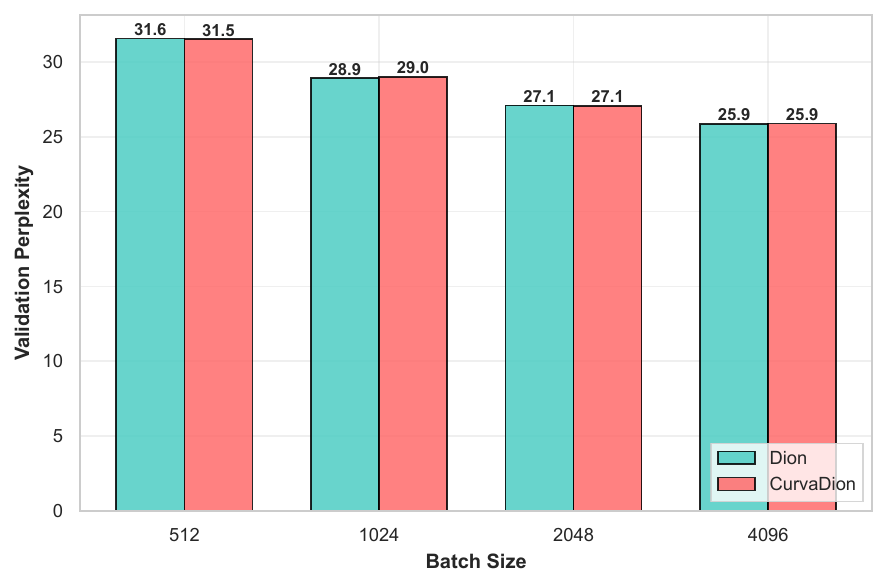}
\includegraphics[width=0.48\textwidth]{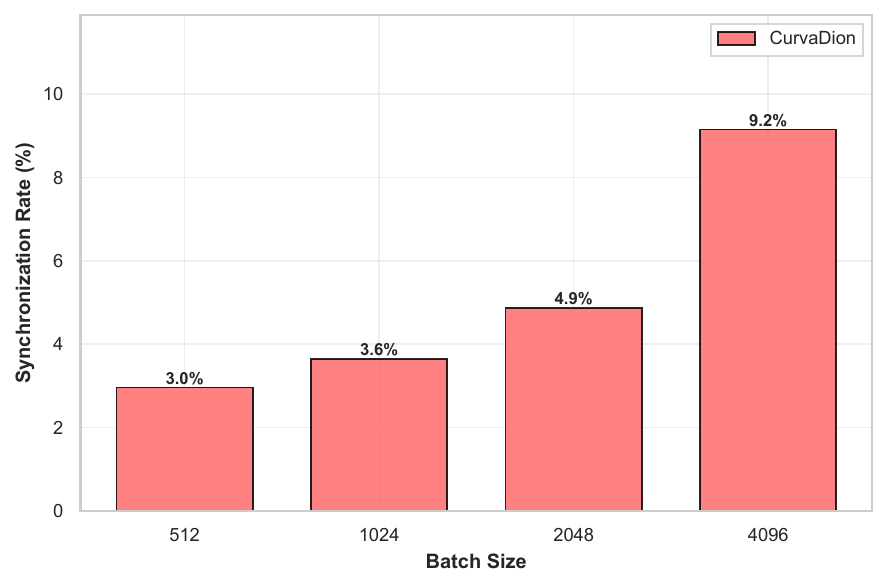}
\caption{CurvaDion maintains convergence quality matching baseline Dion across batch sizes $\{512, 1024, 2048, 4096\}$ tokens (left). Synchronization rate adapts naturally, increasing from 3.0\% to 9.2\% as larger batches reduce gradient noise and enable more aggressive optimization trajectories (right).}
\label{fig:batch_sweeps}
\end{figure}

\subsection{Rank Fraction Variations}

\begin{figure}[h]
\centering
\includegraphics[width=0.48\textwidth]{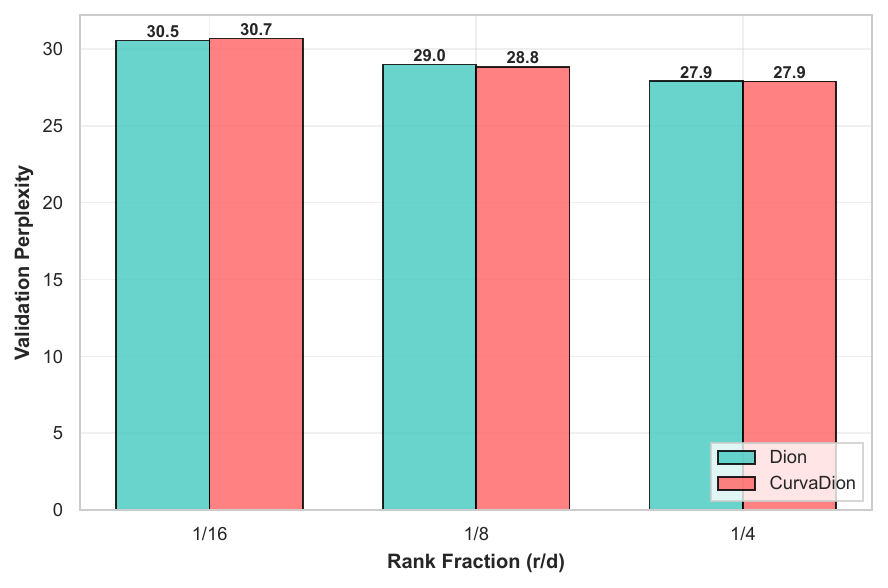}
\includegraphics[width=0.48\textwidth]{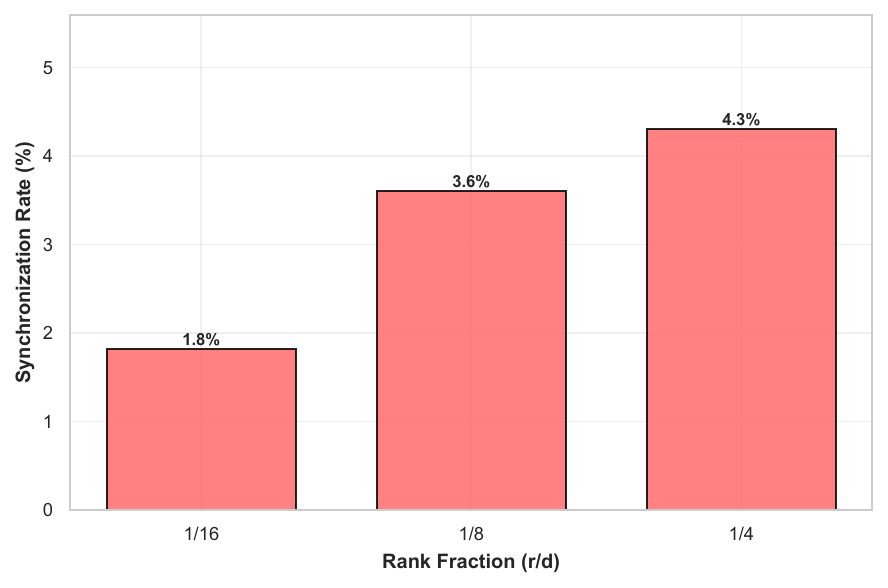}
\caption{CurvaDion matches baseline Dion convergence across rank fractions $r \in \{1/16, 1/8, 1/4\}$ (left). Synchronization rate increases from 1.8\% to 4.3\% with higher rank, reflecting richer updates that traverse the loss landscape more aggressively (right).}
\label{fig:rank_sweeps}
\end{figure}

These results demonstrate that CurvaDion's momentum-based triggering mechanism adapts automatically to different optimization regimes without requiring threshold recalibration.

\newpage
\section{Wall Clock Calculation Details}
\label{app:wall_clock_calc}

We calculate projected speedups using measured timings from Section~\ref{sec:wall_clock_analysis}. CurvaDion achieves approximately 1\% synchronization rate, performing full gradient synchronization in only 1\% of training steps. Each step consists of: 3375~ms computation, 39~ms optimizer update, and network-dependent communication. Baseline Dion synchronizes 619~MB every step. CurvaDion incurs 0.5~ms flag synchronization every step, plus full 619~MB synchronization in 1\% of steps.

\paragraph{InfiniBand/RoCE ($\sim$100~Gb/s):}
Communication time: 70~ms per 619~MB synchronization.
\begin{align*}
T_{\text{Dion}} &= 3375 + 39 + 70 = 3484 \text{ ms} \\
T_{\text{CurvaDion}} &= 3375 + 39 + 0.5 + (0.01 \times 70) = 3415.2 \text{ ms} \\
\text{Speedup} &= \frac{3484}{3415.2} = \mathbf{1.020\times}
\end{align*}

\paragraph{10 Gigabit Ethernet ($\sim$10~Gb/s):}
Communication time: 700~ms per 619~MB synchronization.
\begin{align*}
T_{\text{Dion}} &= 3375 + 39 + 700 = 4114 \text{ ms} \\
T_{\text{CurvaDion}} &= 3375 + 39 + 0.5 + (0.01 \times 700) = 3421.5 \text{ ms} \\
\text{Speedup} &= \frac{4114}{3421.5} = \mathbf{1.202\times}
\end{align*}

\paragraph{Cross-datacenter WAN ($\sim$1~Gb/s + 50ms RTT):}
Communication time: 12000~ms per 619~MB synchronization.
\begin{align*}
T_{\text{Dion}} &= 3375 + 39 + 12000 = 15414 \text{ ms} \\
T_{\text{CurvaDion}} &= 3375 + 39 + 0.5 + (0.01 \times 12000) = 3534.5 \text{ ms} \\
\text{Speedup} &= \frac{15414}{3534.5} = \mathbf{4.361\times}
\end{align*}

\end{document}